\newcommand{\ours}{\textsc{TabFlex}} %
\newcommand{\tabpfn}{\textsc{TabPFN}}
\newtheorem{theorem}{Theorem}
\newtheorem{lemma}[theorem]{Lemma}
\definecolor{pinegreen}{rgb}{0.0, 0.47, 0.44}
\definecolor{cornellred}{rgb}{0.7, 0.11, 0.11}
\definecolor{cadmiumgreen}{rgb}{0.0, 0.42, 0.24}
\definecolor{royalblue}{rgb}{0.0, 0.14, 0.4}
\definecolor{spirodiscoball}{rgb}{0.06, 0.75, 0.99}
\definecolor{mylightblue}{rgb}{0.85, 0.90, 0.94}
\definecolor{kaistblue}{RGB}{20,135,200}
\definecolor{auburn}{RGB}{166,38,57}
\newcommand{\tf}{TF}
\newcommand{\nontf}{Non-TF NN}
\def\A{{\bm{A}}}
\def\dA{\overline{\A}}
\def\B{{\bm{B}}}
\def\dB{\overline{\B}}
\def\C{{\bm{C}}}
\def\h{\bm{h}}
\def\dK{\overline{\mK}}
\def\dimh{H}
\definecolor{mytrain}{HTML}{d8e2dc}
\definecolor{mytest}{HTML}{ffcad4}
\definecolor{myattn}{HTML}{ffe5d9}
\definecolor{mycolor1}{HTML}{f6f6f4}
\definecolor{mycolor2}{HTML}{ebe8e3}
\definecolor{mycolor3}{HTML}{f9f6f1}
\newcommand{\fe}[1]{\phi\left(#1\right)}
\newcounter{temp_theorem_counter}   
\tiny\color{gray},
\def\eqref#1{(\ref{#1})}
\def\set#1{\{#1\}}
\def\1{\bm{1}}
\def\va{{\bm{a}}}
\def\vh{{\bm{h}}}
\def\vk{{\bm{k}}}
\def\vq{{\bm{q}}}
\def\vv{{\bm{v}}}
\def\mK{{\bm{K}}}
\def\mM{{\bm{M}}}
\def\mO{{\bm{O}}}
\def\mQ{{\bm{Q}}}
\def\mS{{\bm{S}}}
\def\mV{{\bm{V}}}
\DeclareMathAlphabet{\mathsfit}{\encodingdefault}{\sfdefault}{m}{sl}
\SetMathAlphabet{\mathsfit}{bold}{\encodingdefault}{\sfdefault}{bx}{n}
\def\gD{{\mathcal{D}}}
\def\sN{{\mathbb{N}}}
\def\sR{{\mathbb{R}}}
\icmltitlerunning{\ours{}: Scaling Tabular Learning to Millions with Linear Attention}
\begin{document}

\twocolumn[
\icmltitle{\ours{}: Scaling Tabular Learning to Millions with Linear Attention}

\icmlsetsymbol{equal}{*}

\begin{icmlauthorlist}
\icmlauthor{Yuchen Zeng }{equal,intern,wisc}
\icmlauthor{Tuan Dinh}{equal,ucsf}
\icmlauthor{Wonjun Kang}{furiosa,snu}
\icmlauthor{Andreas C. M\"ueller}{microsoft}
\end{icmlauthorlist}

\icmlaffiliation{wisc}{University of Wisconsin-Madison}
\icmlaffiliation{ucsf}{University of California San Francisco}
\icmlaffiliation{furiosa}{Furiosa AI}
\icmlaffiliation{snu}{Seoul National University}
\icmlaffiliation{microsoft}{Gray System Lab, Microsoft}
\icmlaffiliation{intern}{Work done during an internship at the Gray Systems Lab, Microsoft}
\icmlcorrespondingauthor{Andreas C. M\"ueller}{amueller@microsoft.com}

\icmlkeywords{Machine Learning, ICML}

\vskip 0.3in
]

\printAffiliationsAndNotice{\icmlEqualContribution} %

\begin{abstract} 
Leveraging the in-context learning (ICL) capability of Large Language Models (LLMs) for tabular classification has gained significant attention for its training-free adaptability across diverse datasets.
Recent advances, such as \tabpfn{}, excel in tabular small-scale datasets but struggle to scale for large and complex datasets. 
Our work enhances the efficiency and scalability of \tabpfn{} for larger datasets by incorporating linear attention mechanisms as a scalable alternative to complexity-quadratic self-attention.
Our model, \ours{}, efficiently handles tabular datasets with thousands of features and hundreds of classes, seamlessly scaling to millions of samples.
For instance, \ours{} processes the \texttt{poker-hand} dataset with more than a million samples in just 5 seconds.  
Our extensive evaluations demonstrate that \ours{} can achieve over a $2\times$ speedup compared to \tabpfn{} and a $1.5\times$ speedup over XGBoost, outperforming 25 tested baselines in terms of efficiency across a diverse range of datasets. 
Furthermore, \ours{} remains highly effective in large-scale datasets, delivering strong performance with significantly reduced computational costs, especially when combined with data-efficient techniques such as dimensionality reduction and data sampling. 
\end{abstract}
\vspace{-.1in}
\section{Introduction}
Enhancing the applicability of the Transformer architecture~\citep{vaswani2017attention} for diverse data modalities beyond textual data and non-language tasks~\citep{gpt4,brown2020gpt3,qwen,llama3} has achieved remarkable success~\cite{gemini}, from vision~\citep{qwenvl}, audio~\citep{qwenaudio,qwen2audio} to bio-signals~\cite{wan2023eegformer} and protein sequences~\cite{rives2019biological,hayes2024simulating}.
Tabular data, as one of the most fundamental and critical data types in real-world applications -- including recommendation systems~\citep{zhang2019deep}, finance~\citep{arun2016loan}, and medicine~\citep{johnson2016mimic} has attracted a great deal of attention and attempts to explore the potential of Transformer-based models, particularly for tabular classification~\citep{arik2021tabnet,hollmann2023tabpfn,huang2020tabtransformer,dinh2022lift,gorishniy2021revisiting}.
For example, the FT transformer~\citep{gorishniy2021revisiting} converts each sample into a sequence of embeddings to use the transformer to make predictions. 
TabTransformer~\citep{huang2020tabtransformer} learns embeddings for categorical features, concatenating them with continuous features. 
On the other hand, LIFT~\citep{dinh2022lift} converts tabular data combined with feature names and task descriptions into textual sentences as input to LLMs.
In particular, compared to traditional methods for tabular data such as gradient-boosted trees~\cite{friedman2001greedy}, these transformer-based methods often suffer from high latency overhead for training and inference, primarily due to their larger model sizes.

The recent \tabpfn{}~\citep{hollmann2023tabpfn} addresses the latency limitations of Transformer-based methods by utilizing the in-context learning (ICL) capability~\citep{brown2020gpt3} of LLMs for directly learning a new task from examples without parameter updates, attaining superior efficiency and performance on small-scale datasets.
In particular, \tabpfn{} incorporates all training and testing samples into a single prompt and classifies the testing samples in one forward pass, making it highly efficient and effective on simple and small tabular datasets.
However, \tabpfn{} faces challenges with complex datasets that require large sample sizes for effective learning, primarily due to the scalability limitations imposed by the quadratic complexity of the attention mechanism, introducing difficulties for both the scalable pretraining and inference processes.

In this work, we address the limitations of the scalability of \tabpfn{} and improve the effectiveness of Transformer-based methods for tabular classification. 
We first systematically analyze scalable alternatives to attention mechanisms, focusing on state-space models (SSMs) within the Mamba model~\citep{gu2024mamba} and linear attention~\citep{katharopoulos2020transformers}.
We find that \textbf{(Finding 1)} {the inherent causality of SSMs impedes ICL performance compared to non-causal mechanisms.}
In contrast, \textbf{(Finding 2)} {linear attention does not suffer from this limitation, maintaining comparable performance with improved computational efficiency.}
Thus, we develop \ours{} leveraging linear attention as the attention mechanism, comprising three sub-models where each is optimized for different scenarios and selected based on dataset characteristics (e.g., sample size). 
\ours{} efficiently handles tabular datasets with thousands of features and hundreds of classes, scaling to millions of samples.
Via the comprehensive evaluation on a diverse range of datasets, we find that \textbf{(Finding 3)} {\ours{} consistently achieves competitively high performance with impressive computational efficiency compared to 25 baselines, including \tabpfn{} and XGBoost.}
Notably, \ours{} perform highly on \texttt{poker-hand} dataset with $1$M+ samples in \textit{less than 5 seconds} and
attains high accuracies on image datasets such as MNIST~\citep{mnist}, Fashion-MNIST~\citep{fmnist}, and CIFAR-10~\cite{krizhevsky2009learning} in less than one second.
Furthermore, our ablation studies suggest that \ours{} can seamlessly incorporate data-efficient techniques such as dimensionality reduction and data sampling for more computation reduction.

\section{Related Works}
\label{sec:related_work}
\paragraph{Transformer-based approaches for tabular classification.}
The pioneering TabNet~\citep{arik2021tabnet} applies unsupervised pre-training on masked tabular datasets to infer missing features, enhancing the model's understanding of data and features before supervised learning on feature selection for the final decision boundary.
TabTransformer~\citep{huang2020tabtransformer} proposes handling categorical features by concatenating their contextual embeddings into numerical features. 
FT-Transformer~\citep{gorishniy2021revisiting} converts samples to embedding sequences using a feature tokenizer for the transformer. 
LIFT~\citep{dinh2022lift} converts each sample into a sentence using a predefined template incorporating the task description and feature names, as the natural input to apply ICL in LLM.
TabR~\citep{gorishniy2024tabr} proposes a retrieval-augmented model with a custom kNN-like component to retrieve and extract signals from the nearest neighbors. 
BiSHop~\citep{xu2024bishop} establishes interconnected directional learning modules to process data column-wise and row-wise for tabular learning.
XTab~\citep{zhu2023xtab} utilizes independent featurizers and federated learning to resolve inconsistent column types and quantities.

The widely adopted transformer-based approaches for tabular classification—\tabpfn{}~\citep{hollmann2023tabpfn} is trained offline on synthetic datasets derived from previous distributions to perform ICL, allowing efficient inference in small-scale tabular classification tasks.
However, it is limited to small tabular classification datasets.
To handle it, many concurrent variants are proposed.
MixturePFN~\citep{xu2025mixturepfn} improves scalability by routing new test samples to a pool of scalable prompters using Sparse Mixture of In-Context Prompters, while LoCalPFN~\citep{thomas2024retrieval} proposes retrieving a local subset of task-specific data for efficiently fine-tuning on. 
\citet{ma2024context} introduce in-context data distillation to optimize TabPFN’s context and remove the data size constraint.
TuneTable~\citep{feuer2024tunetables} scales \tabpfn{} to large datasets by performing a prefix tuning per dataset.
TabPFNv2~\citep{tabpfnv2} enhances TabPFN’s accuracy in low-data regimes (fewer than 10,000 samples), complementing our focus on speed and scalability.
Our method is also based on \tabpfn{}, extending its scalability to large datasets while maintaining and improving efficiency by simply replacing the softmax attention with linear attention.

\textbf{Attention mechanisms and scalable alternatives.~~}
As Transformers~\citep{vaswani2017attention} face the scaling challenge for long sequences due to the quadratic computational and memory complexity, scalable alternatives have been proposed~\citep{orvieto2023resurrecting,sun2023retentive}.
While RNNs provide efficient linear-time inference, they struggle with training efficiency and lack the parallelization capabilities of Transformer architectures. 
Linear attention~\citep{katharopoulos2020transformers} addresses both concerns by reformulating self-attention as a linear dot-product of kernel feature maps, reducing the computational complexity from quadratic to linear time. 
Furthermore, causal linear attention can be interpreted as a form of RNN, as it predicts based on a current token and a ``hidden state,'' summarizing information from the previous tokens. 
State-space models (SSMs) address RNNs' drawbacks by considering linear RNNs with novel algorithms for efficient training~\citep{gu2021combining,gu2022efficiently,gu2024mamba,dao2024transformers,peng2023rwkv,orvieto2023resurrecting,sun2023retentive}. 
\citet{dao2022flashattention} identified another bottleneck in attention mechanisms' speed stemming from the relatively slow access to high-bandwidth memory (HBM) in GPUs and proposed FlashAttention~\citep{dao2024flashattention,shah2024flashattention} to restructure attention computation to optimize the utilization of high-speed on-chip SRAM while minimizing access to slower HBM, enhancing the efficiency of GPU-based attention operations.

See Section~\ref{app:related_works} for an extended discussion of related works.

\def\myhz{1.8}
\def\myha{-3.6}
\def\myhb{-8}
\def\myhc{-12}

\def\myhx{-5.4}

\def\myhy{-7.2}

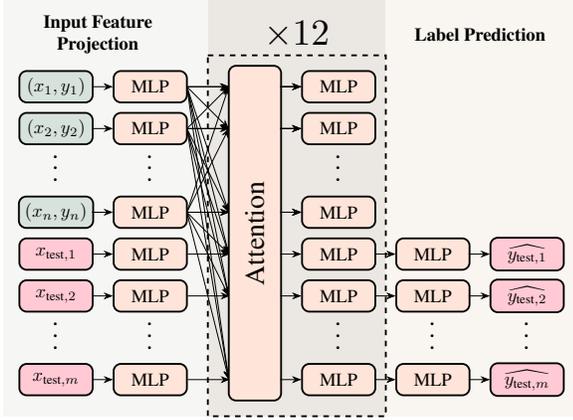
\begin{figure}
    \centering
\resizebox{.45\textwidth}{!}{
\begin{tikzpicture}[rotate=90,transform shape,xscale=-1,
    node distance = 0.5cm and 1cm,
    smallbox/.style = {draw, line width=1pt, minimum width=1.4cm, minimum height=0.6cm, rotate=90 ,rounded corners,xscale=-1,},
    attention/.style = {draw, line width=1pt, minimum width=6.4cm, minimum height=1cm, fill=myattn, rounded corners,xscale=-1},
    >=Stealth
]

\path[fill=mycolor1] (-1.7,2.8) rectangle ++(8,-3.9);
\node[xscale=-1,rotate=270, align=center] at (-1,1) {\textbf{Input Feature} \\ \textbf{Projection}};

\path[fill=mycolor2] (-1.7,-1.1) rectangle ++(8,-3.4);
\node[xscale=-1,rotate=270] at (-1,-2.8) {\scalebox{2}{$\times 12$}};

\path[fill=mycolor3] (-1.7,-4.5) rectangle ++(8,-3.7);
\node[xscale=-1,rotate=270, align=center] at (-1,-6.3) { \textbf{Label Prediction}};
\draw[draw=black,dashed,line width=1pt] (-0.6,-1.1) rectangle ++(6.9,-3.4);

\foreach \i [count=\xi from 0] in {1,2,n} {
    \ifnum\xi<2
        \node[smallbox,fill=mytrain] (x\i) at (\xi*0.8,\myhz) {\rotatebox{0}{$(x_{\i},y_{\i})$}};
    \else
        \node[smallbox,fill=mytrain] (x\i) at (0.8+\xi*0.8,\myhz) {\rotatebox{0}{$(x_{\i},y_{\i})$}};
        \node at (1.6,\myhz) {\Large$\cdots$};
    \fi
}
\foreach \i [count=\xi from 4] in {1,2,m} {
    \ifnum\xi<6
        \node[smallbox,fill=mytest] (xt\i) at (\xi*0.8,\myhz) {\rotatebox{0}{$x_{\text{test},\i}$}};
    \else
        \node[smallbox,fill=mytest] (xt\i) at (0.8+\xi*0.8,\myhz) {\rotatebox{0}{$x_{\text{test},\i}$}};
        \node at (4.8,\myhz) {\Large$\cdots$};
    \fi
}

\foreach \i [count=\xi from 0] in {1,2,n} {
    \ifnum\xi<2
        \node[smallbox,fill=myattn] (x\i) at (\xi*0.8,0) {\rotatebox{0}{MLP}};
    \else
        \node[smallbox,fill=myattn] (x\i) at (0.8+\xi*0.8,0) {\rotatebox{0}{MLP}};
        \node at (1.6,0) {\Large$\cdots$};
    \fi
}

\foreach \i [count=\xi from 4] in {1,2,m} {
    \ifnum\xi<6
        \node[smallbox,fill=myattn] (xt\i) at (\xi*0.8,0) {\rotatebox{0}{MLP}};
    \else
        \node[smallbox,fill=myattn] (xt\i) at (0.8+\xi*0.8,0) {\rotatebox{0}{MLP}};
        \node at (4.8,0) {\Large$\cdots$};
    \fi
}
\foreach \i in {1,2,n}
    \draw[<-] (x\i.west) -- ++(0,0.4);
\foreach \i in {1,2,m}
    \draw[<-] (xt\i.west) -- ++(0,0.4);

\foreach \j in {1,2,4}
    \draw[<-] (3.2,-1.5) -- (0.8*\j-0.8,-0.7);
\foreach \j in {1,2,4}
    \draw[<-] (4,-1.5) -- (0.8*\j-0.8,-0.7);
\foreach \j in {1,2,4}
    \draw[<-] (5.6,-1.5) -- (0.8*\j-0.8,-0.7);

\foreach \i in {1,2,n}
    \draw[->] (x\i.east) -- ++(0,-0.8);
\foreach \i in {1,2,4}
    \foreach \j in {1,2,4}
        \draw[->] (0.8*\i-0.8,-0.7) -- ++(0.8*\j-0.8*\i,-0.8);
\foreach \i in {1,2,m}
    \draw[->] (xt\i.east) -- ++(0,-0.8);

\node[attention] (att) at (2.8,-2) {\scalebox{1.5}{Attention}};

\foreach \i [count=\xi from 0] in {1,2,n} {
    \ifnum\xi<2
        \node[smallbox,fill=myattn] (x\i) at (\xi*0.8,\myha) {\rotatebox{0}{MLP}};
    \else
        \node[smallbox,fill=myattn] (x\i) at (0.8+\xi*0.8,\myha) {\rotatebox{0}{MLP}};
        \node at (1.6,\myha) {\Large$\cdots$};
    \fi
}

\foreach \i [count=\xi from 4] in {1,2,m} {
    \ifnum\xi<6
        \node[smallbox,fill=myattn] (xt\i) at (\xi*0.8,\myha) {\rotatebox{0}{MLP}};
    \else
        \node[smallbox,fill=myattn] (xt\i) at (0.8+\xi*0.8,\myha) {\rotatebox{0}{MLP}};
        \node at (4.8,\myha) {\Large$\cdots$};
    \fi
}

\foreach \i in {1,2,n}
    \draw[<-] (x\i.west) -- ++(0,0.4);
\foreach \i in {1,2,m}
    \draw[<-] (xt\i.west) -- ++(0,0.4);

\foreach \i [count=\xi from 4] in {1,2,m} {
    \ifnum\xi<6
        \node[smallbox,fill=myattn] (xt\i) at (\xi*0.8,\myhx) {\rotatebox{0}{MLP}};
    \else
        \node[smallbox,fill=myattn] (xt\i) at (0.8+\xi*0.8,\myhx) {\rotatebox{0}{MLP}};
        \node at (4.8,\myhx) {\Large$\cdots$};
    \fi
}

\foreach \i in {1,2,n}
    \draw[<-] (x\i.west) -- ++(0,0.4);
\foreach \i in {1,2,m}
    \draw[<-] (xt\i.west) -- ++(0,0.4);

\foreach \i [count=\xi from 4] in {1,2,m} {
    \ifnum\xi<6
        \node[smallbox,fill=mytest] (xt\i) at (\xi*0.8,\myhy) {\rotatebox{0}{$\widehat{y_{\text{test},\i}}$}};
    \else
        \node[smallbox,fill=mytest] (xt\i) at (0.8+\xi*0.8,\myhy) {\rotatebox{0}{$\widehat{y_{\text{test},\i}}$}};
        \node at (4.8,\myhy) {\Large$\cdots$};
    \fi
}

\foreach \i in {1,2,n}
    \draw[<-] (x\i.west) -- ++(0,0.4);
\foreach \i in {1,2,m}
    \draw[<-] (xt\i.west) -- ++(0,0.4);

\end{tikzpicture}}
    \captionsetup{skip=5pt}
    \vspace{-0.1in}
    \caption{
   \textbf{
    Illustration of \tabpfn{}'s for classifying the entire dataset in one forward pass. }
    In each layer, attention outputs for training sample positions attend to all other training samples, ensuring that predictions are invariant to the order of training samples.
    Conversely, attention outputs for test sample positions attend only to training samples, ensuring independent predictions for each test instance, unaffected by other test samples.
    The final classification for each test sample is derived by applying an MLP to the corresponding Transformer output at its respective position.
    }
    \label{fig:tabpfn_demo}
    \vspace{-.1in}
\end{figure}

\section{Preliminaries}
We elucidate key concepts of \tabpfn{} and two prominent scalable attention mechanisms (SSMs and linear attention).

\paragraph{Implementation of ICL in TabPFN~\citep{hollmann2023tabpfn}.} 
Fig.~\ref{fig:tabpfn_demo} illustrates the design of \tabpfn{}, where each sample is treated as a token, starting with training samples and followed by testing samples.
These samples are embedded (features $x$ and labels $y$ for training and only features $x$ for testing samples) with MLPs before being concatenated.
Outputs corresponding to training sample positions are computed by attending to all other training samples, while the outputs for test sample positions attend to the training samples — enabling each test prediction to leverage the full training set without being influenced by other test samples. 
Test predictions are generated by projecting the Transformer outputs at test positions into probability distributions. 
This implementation is functionally equivalent to standard ICL but significantly more efficient. 
Standard ICL requires $m$ (number of test samples) separate prompts, each containing all training samples and one test sample, necessitating $m$ prediction passes.
A notable feature of \tabpfn{} is the encoder with non-causal attention, allowing outputs within training sample positions to interact freely, rendering the order of training samples inconsequential.

\paragraph{State-Space Models (SSMs).}
The SSM framework is based on a continuous system that transforms a one-dimensional signal $x(t) \in \sR$ into $y(t) \in \sR$ through an intermediate $\dimh$-dimensional latent state $\h(t) \in \sR^\dimh$, as shown in \eqref{eq:rnn_continuous_ssm}. 
Here, $\B \in \sR^{\dimh \times 1}$ is the input transition vector and $\A \in \sR^{\dimh \times \dimh}$ is the state transition matrix. 
The latent state $\h(t)$ is then projected into the output $y(t)$ using the output mapping vector $\C \in \sR^{1\times \dimh}$.
For deep learning applications, discrete $\dA$ and $\dB$ replace continuous $\A$ and $\B$ through discretization methods, such as zero-order hold. 
This yields updated hidden state and output equations as shown in \eqref{eq:rnn_discrete_ssm}.
While \eqref{eq:rnn_discrete_ssm} is structured as linear RNN, it can be reformulated as Convolutional Neural Network (CNN) as \eqref{eq:cnn_discrete_ssm}, enabling efficient and parallelizable training.
SSMs address the quadratic time complexity problem w.r.t sequence length, as the output for each new token depends solely on the hidden states and the current token, in contrast to standard attention mechanisms that attend to all previous tokens.
Consequently, SSMs operate as a causal mechanism.
\begin{align}
\small
    \h'(t) &= \A \vh(t) +  \B x(t), \quad
    y(t) = \C\h(t) \label{eq:rnn_continuous_ssm} \\
    \h_t &= \dA \h_{t-1} + \dB x_{t}, \quad
    y_t = \C\h_t \label{eq:rnn_discrete_ssm} \\
     \dK &= (\C\dB, \C\dA^{\,}\dB, \ldots, \C {\dA}^{t-1}\dB), \\
    &(y_1, \ldots, y_t) = (x_1, \ldots, x_t) * \dK \label{eq:cnn_discrete_ssm}
\end{align}
\textbf{Linear attention.~~} 
Assume a sequence with length $n \in \sN^+$ and embedding size $d \in \sN^+$.
We first focus on non-causal cases.
For the $i$-th position, let $\vq_i \in \sR^d$, $\vk_i \in \sR^d$, and $\vv_i \in \sR^d$ denote the query, key, and value vectors, respectively, where $i = 1,\dots, n$.
In softmax attention, the similarity between $\vq_i$ and $\vk_j$ for any $i \neq j$ is computed as $\exp{(\vq_i^\top \vk_j)}$.
The attention output at the $i$-th position, denoted as $\va_i\in\sR^{d}$, is obtained by averaging the values across all tokens weighted by their similarities.
This process requires $O(n)$ complexity, as it necessitates computing similarities with all $n$ tokens.
Linear attention reduces this complexity by replacing the similarity computation from $\exp(\vq_i^\top \vk_j)$ with $\phi(\vq_i)^\top \phi(\vk_j)$, where $\phi: \sR^{d} \to \sR^d$ is a feature conversion function.
For linear attention outputs~\eqref{eq:linear_attn} across all positions, we identify two common terms: $\sum_{j=1}^n \fe{\vk_j} \cdot \vv_j$ and $\sum_{j=1}^n \fe{\vk_j}$, which can be computed once.
Consequently, for the linear output at position $i$, we only need to compute $\phi(\vq_i)$ and multiply it with these two statistics, resulting in $O(1)$ complexity, thus significantly reducing computational demands. 
\begin{align}
    \text{(Softmax)}~\va_i &= \frac{\sum_{j=1}^n \exp{\left(\vq_i^\top\vk_j\right)} \cdot \vv_j}{\sum_{j=1}^n \exp{\left(\vq_i^\top\vk_j\right)}} \label{eq:linear_attn}\\
    \text{(Linear)}~\va_i &= 
             \frac{\fe{\vq_i}^\top \sum_{j=1}^n \fe{\vk_j} \cdot \vv_j}
         {\fe{\vq_i}^\top \sum_{j=1}^n \fe{\vk_j}}
\end{align}
For causal cases, for position $i$, we replace $\sum_{j=1}^n$ with $\sum_{j=1}^i$, as each token attends only to previous tokens.
The statistics then become $\sum_{j=1}^{i-1} \fe{\vk_j} \cdot \vv_j$ and $\sum_{j=1}^{i-1} \fe{\vk_j}$, which can be viewed as hidden states in RNNs.
Thus, causal linear attention can be conceptualized as a linear RNN, which is also a variant of SSM.

\section{Architectural Exploration for Scalable Tabular Learning}\label{sec:exploration}

\begin{figure*}[t]
    \centering
    \begin{subfigure}[b]{0.32\textwidth}
        \includegraphics[width=\linewidth]{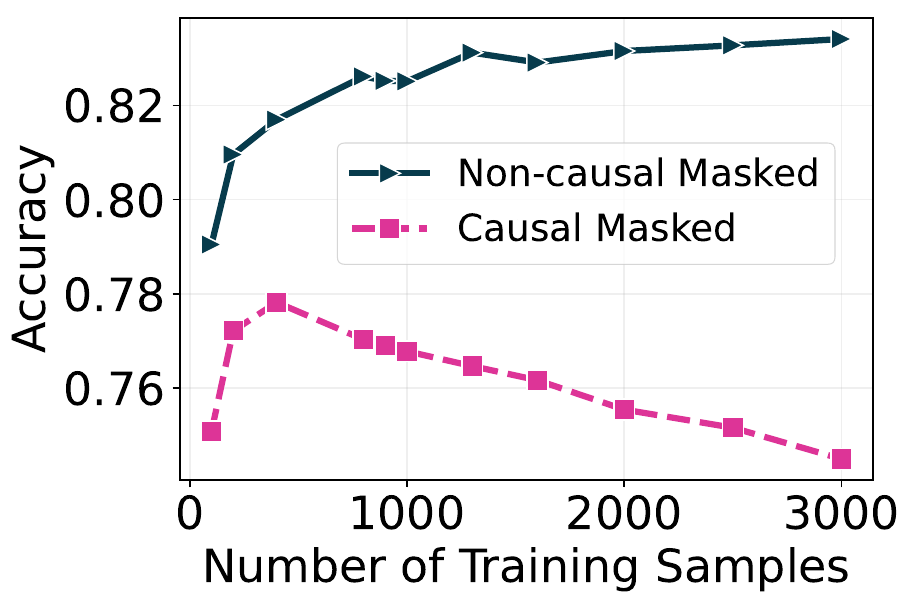}
    \caption{
        \textbf{Effect of causal masking}. When more samples are provided, the non-causal model shows better sample utilization and accuracy, while the causal model's performance plateaus early and declines.
    }
    \label{fig:causal}
    \end{subfigure}\hfill
    \begin{subfigure}[b]{0.32\textwidth}
        \includegraphics[width=0.9\linewidth]{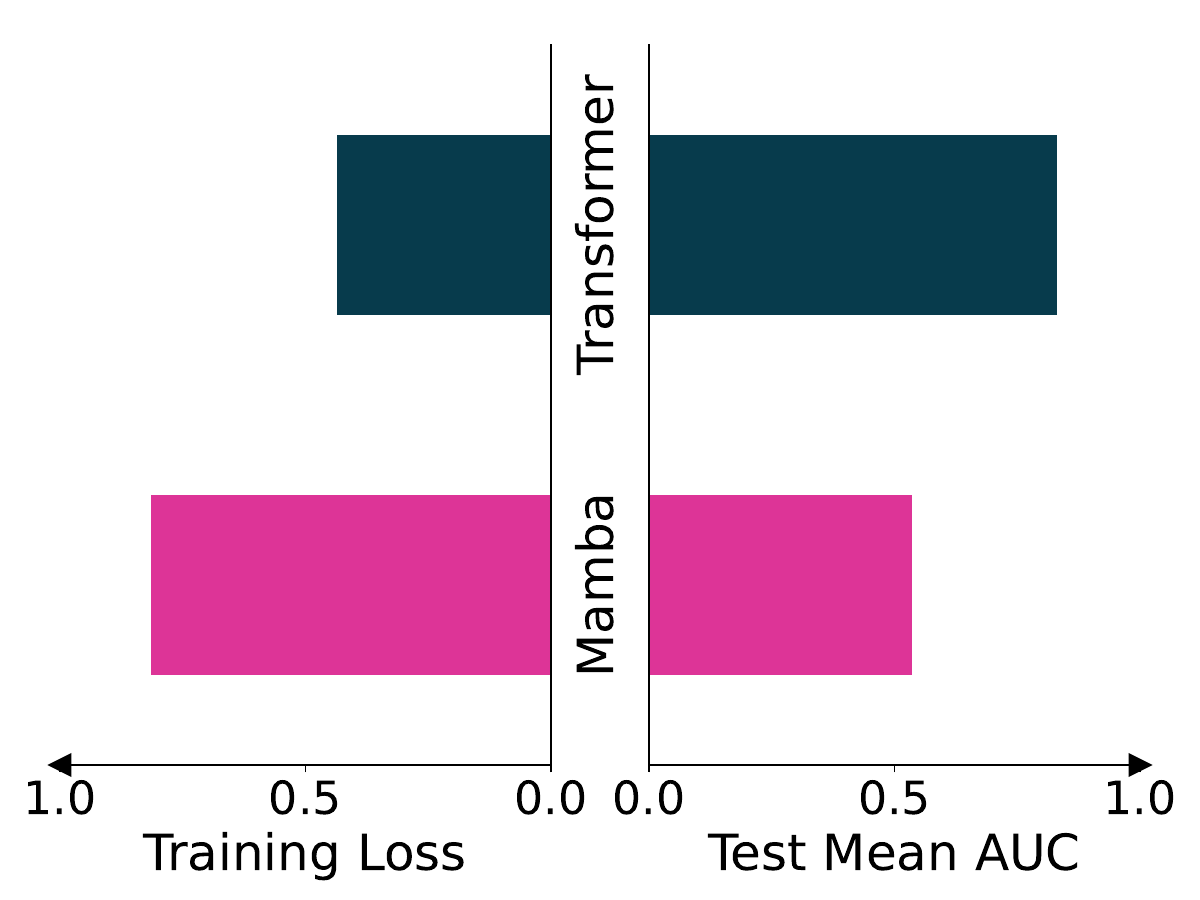}
    \caption{
       \textbf{ICL performance comparison between Mamba and Transformer models}. Results show that Transformer-based models achieve lower training loss and higher AUC across 150 test datasets.
    }
    \label{fig:mamba}
    \end{subfigure}\hfill
    \begin{subfigure}[b]{0.32\textwidth}
        \includegraphics[width=\linewidth]{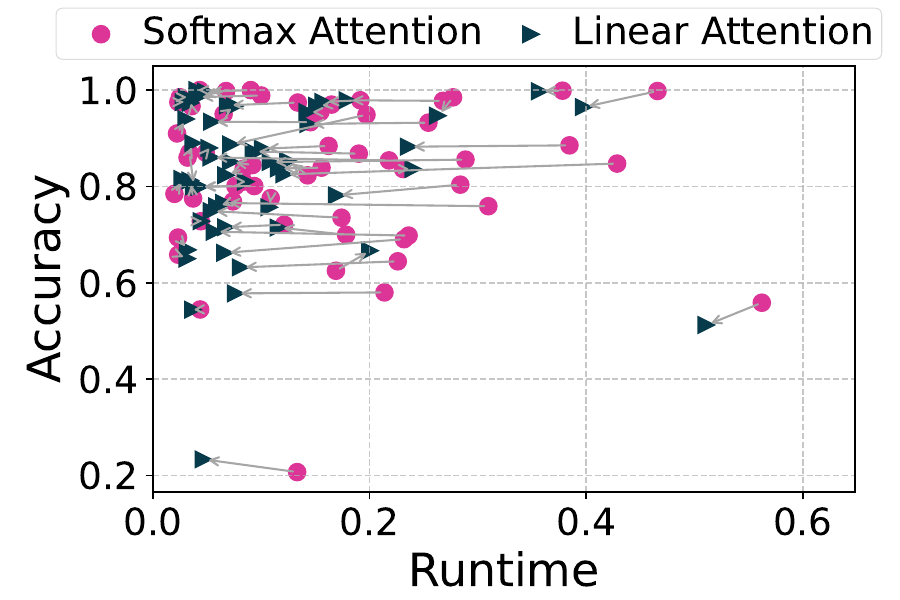}
    \caption{
        \textbf{Accuracy and runtime comparison of softmax and linear attention}. 
        Replacing softmax with linear attention preserves comparable accuracy while significantly reducing runtime.
    }
    \label{fig:attention_type}
    \end{subfigure}
    \vspace{-.1in}
    \caption{\textbf{Impact of model architecture on tabular classification performance.}
    Please refer to Fig.~\ref{fig:attn_explore_detailed} for a detailed breakdown.
    }
    \label{fig:attn_explore}
\end{figure*}

We analyze State-Space Models and linear attention as attention architecture alternatives to enhance the scalability of \tabpfn{}, focusing on tabular classification with ICL. 

\subsection{Causal Model vs. Non-Causal Model}
Ideally, the order of training samples (i.e., in-context demonstrations) in the prompt should not affect the final prediction. 
However, SSMs are inherently causal, computing outputs based on new inputs and hidden states derived from previous inputs. 
This suggests a potential drawback in this context. 
To validate our hypothesis regarding the suboptimal performance of causal models in ICL, we conduct two experiments: 
 (i) comparing the performance of \tabpfn{} with a modified version of the same model that uses causal attention, and 
 (ii) evaluating \tabpfn{} against both its original version and a model incorporating Mamba (specifically Mamba-II), a leading SSM-based architecture.

\paragraph{Causal Attention vs. Non-Causal Attention.}
Our first experiment compares the ICL capabilities of non-causal and causal attention mechanisms using the same experimental setup as \tabpfn{}, shown in Fig.~\ref{fig:causal}.
We replicate \tabpfn{}'s methodology for generating synthetic datasets from priors, training a modified version employing causal attention instead.
For the inference, we generate 20 synthetic datasets that maintain a consistent 1000 test samples with varying numbers of training samples. 
We average the classification accuracy across 20 simulations.

We observe that non-causal attention generally outperforms causal attention. 
As more training samples are given, the accuracy of the non-causal model continues to improve. 
In contrast, the causal attention model shows accuracy improvements only within a very small range of training samples, after which performance begins to decline with additional samples. 
These findings indicate that \tabpfn{} with non-causal attention functions as an effective ICL model, adeptly leveraging context from a large number of samples. Conversely, the same model equipped with causal attention fails to capitalize on the additional data, highlighting the superiority of the non-causal approach in this tabular learning scenario.
Our observation is supported by empirical studies~\citep{ding2024causallm,gong2023improving}, which show that causal attention is suboptimal for ICL. 
Moreover, most theoretical analyses of ICL assume non-causal attention~\citep{ahn2023transformers,bai2023transformers}.

\paragraph{Mamba vs. Transformer.}
We further investigate whether Mamba, the most popular SSM-based model, is suitable for ICL. 
We replicate \tabpfn{}'s training methodology, substituting the transformer layer with an Mamba layer. 
To evaluate performance, we test the modified model on the same 150 validation datasets used in the \tabpfn{} study (See Section F.3 for details). 
Fig.~\ref{fig:mamba} visualizes the training loss and test mean AUC for both methods.
The model with Mamba exhibits significantly higher training loss than the original \tabpfn{}, with substantially lower test mean AUC.
This experiment with a popular SSM model further demonstrates that SSMs underperform non-causal models in our specified tasks.

\subsection{Softmax Attention vs. Linear Attention }

To address the quadratic complexity of standard attention mechanisms, linear attention has emerged as a popular alternative~\citep{katharopoulos2020transformers}. 
To investigate its impact on ICL in tabular classification, we replaced \tabpfn{}'s attention mechanism with linear attention and trained a model following the same strategy as \tabpfn{}. 
We then evaluated both \tabpfn{} and this linear attention model on 57 real datasets (used in Table 2 of \citet{mcelfresh2023tabzilla}, where \tabpfn{} achieved top performance among 19 methods for tabular classification).
Fig.~\ref{fig:attention_type} visualizes the test accuracy and runtime. %
Our results demonstrate that linear attention does not decrease performance and significantly improves speed, making it a suitable method for scaling \tabpfn{} to larger datasets.
Finally, in Section~\ref{app:attention}, we demonstrate that linear attention significantly outperforms sliding window attention~\citep{beltagy2020longformer} in our setting.

\section{\ours{}: Scaling TabPFN for Large Datasets}\label{sec:ours}
Based on the empirical findings presented in Section~\ref{sec:exploration}, we identify non-causal linear attention as the optimal candidate to replace standard softmax attention in \tabpfn{}. 
This section proceeds in two parts: first, we conduct a thorough analysis of the linear attention mechanism to ensure its efficient implementation.; subsequently, we leverage this efficient implementation to train our proposed model, \ours{}. 
Our approach aims to enhance the scalability and performance of tabular learning while maintaining computational efficiency.

\paragraph{Computation Analysis.}
\citet{dao2022flashattention} demonstrates that significant wall-clock speedup for softmax attention can be achieved by optimizing the number of memory reads/writes between GPU high bandwidth memory (HBM) and GPU on-chip SRAM.
Based on this criterion, \citet{yang2024gla} proposed FlashLinearAttention for speeding up \textit{causal} linear attention.
This raises a natural question: can we further improve the speed of non-causal linear attention (we omit non-causal when it does not cause further confusion) by reducing the number of memory reads/writes? 
Our results in Theorem~\ref{thm:computation} analyze the \#HBM access and HBM memory usage of FlashLinearAttention and linear attention, concluding that further optimization is not necessary. 
In Section~\ref{app:complexity}, we first propose an HBM-efficient linear attention, and then show that the PyTorch implementation only incurs a marginal increase in terms of \#HBM access and HBM memory usage, with FLOPS remaining unchanged.
We provide more details, including the analysis of different attention mechanisms and actual memory usage and runtime visualization of these mechanisms in Section~\ref{app:complexity}.
The resulting theorem below demonstrates that the straightforward PyTorch implementation of linear attention achieves linear HBM access, matching the performance of FlashLinearAttention after optimization.
Consequently, our model adopts the straightforward implementation of linear attention.

\begin{theorem}[High Bandwidth Memory Efficiency of Linear Attention]\label{thm:computation}
     Let $\mQ, \mK, \mV \in \sR^{N \times D}$ represent the query, key, and value matrices for a single attention head, where $N$ is the sequence length and $D$ is the embedding size. 
     With any element-wise kernel feature mapping (e.g., $\operatorname{elu}(\cdot) + 1$),
    both causal FlashLinearAttention (Alg.~\ref{alg:fla}) and non-causal linear attention (Listing~\ref{listing:linear_attn}) require $O(ND)$ HBM accesses, $O(ND)$ HBM memory, and $O(ND^2)$ FLOPS to compute the attention output. 
\end{theorem}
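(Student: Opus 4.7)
The plan is to prove each of the three resource bounds (HBM accesses, HBM memory, FLOPS) by decomposing the linear attention computation into a constant number of atomic matmul/reduction/elementwise kernels and accounting for the cost of each kernel separately. Since the theorem lumps together the causal FlashLinearAttention of \citet{yang2024gla} and the plain non-causal implementation, I would handle the two cases separately, with most of the work going into the non-causal version (the causal bounds are already established in the FlashLinearAttention paper and I would invoke them).

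\textbf{Non-causal case.} The key observation is that the $N \times N$ similarity matrix never needs to be materialized. Writing the per-row output as $\va_i = \fe{\vq_i}^\top \mS \,/\, \fe{\vq_i}^\top \vz$ with the \emph{shared} summary statistics
\begin{equation*}
    \mS \;=\; \fe{\mK}^\top \mV \;\in\; \sR^{D\times D},
    \qquad
    \vz \;=\; \fe{\mK}^\top \vone \;\in\; \sR^{D},
\end{equation*}
the non-causal output can be produced by the following fixed pipeline: (i) load $\mQ,\mK$ and apply the element-wise $\phi$ in tiles to produce $\fe{\mQ},\fe{\mK}$, (ii) compute $\mS$ and $\vz$ by a streamed $D\times D$ accumulation over the $N$ rows of $\fe{\mK}$ and $\mV$, (iii) stream $\fe{\mQ}$ through the two multiplications $\fe{\mQ}\mS$ and $\fe{\mQ}\vz$, (iv) divide element-wise. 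I would tally each step: each of (i), (iii), (iv) touches the $N\times D$ operands once, giving $O(ND)$ HBM reads/writes and $O(ND)$ or $O(ND^2)$ FLOPS depending on whether the step is elementwise or a matmul; step (ii) reads the two $N\times D$ operands once and writes a $D\times D$ buffer, giving $O(ND + D^2) = O(ND)$ HBM traffic (absorbing $D^2 \le ND$) and $O(ND^2)$ FLOPS. Summing the four steps yields exactly the claimed $O(ND)$ HBM accesses, $O(ND)$ HBM footprint (the inputs plus the $D\times D$ summary plus $O(ND)$ intermediates, of which $D^2 \le ND$), and $O(ND^2)$ FLOPS. I would make the mild standing assumption $D \le N$, which is what makes $O(ND+D^2)$ collapse to $O(ND)$ and matches the regime in which the theorem is meaningful.

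\textbf{Causal case.} Here the summary $\mS$ must evolve with the prefix, so the naive pipeline above does not work; this is precisely the difficulty that FlashLinearAttention was designed to address. I would cite Algorithm~\ref{alg:fla} / the GLA paper and briefly recap the chunked-matmul argument: partition the sequence into $N/C$ chunks of size $C$, maintain a single running $D\times D$ state in on-chip SRAM, and within each chunk combine an intra-chunk causal matmul (of size $C\times C$, requiring $O(CD)$ HBM traffic per chunk) with an inter-chunk state update. Summing over $N/C$ chunks yields $O(ND)$ HBM accesses and $O(ND^2)$ FLOPS with $O(ND)$ total memory, matching the non-causal bound.

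\textbf{Expected obstacle.} The routine matmul and elementwise counts are mechanical; what requires care is the memory bound, specifically showing that the intermediate $\fe{\mQ}\mS$ and the per-row normalizers do not inflate the footprint beyond $O(ND)$, and that the element-wise nature of $\phi$ is essential, so $\fe{\mQ}$ and $\fe{\mK}$ can be produced in place without materializing any auxiliary $N\times N$ tensor. I would also be explicit that these counts are for a single head, so that the statement matches the hypothesis of the theorem verbatim.
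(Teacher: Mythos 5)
Your proposal is correct and follows essentially the same route as the paper: a direct accounting argument over a streamed/tiled implementation that exploits the $D\times D$ summary statistic $\phi(\mK)^\top\mV$ to avoid materializing the $N\times N$ similarity matrix, tallying HBM traffic, footprint, and FLOPs pass by pass (the paper does this via Lemmas~\ref{lemma:hbm_access}--\ref{lemma:flops} with exact constants $5ND$ and $4ND$ for both the causal block algorithm and an HBM-efficient non-causal variant, then transfers to the PyTorch listing by noting the extra $2D^2$ accesses and $D^2$ memory for the materialized $D\times D$ buffer). One point in your favor: you make explicit the standing assumption $D\le N$ needed to absorb the $D^2$ terms into $O(ND)$, which the paper's passage from $5ND+2D^2$ to the stated $O(ND)$ leaves implicit.
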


\begin{algorithm}
\SetKwInOut{Input}{Input}
\Input{A dataset $\gD$ with $n$ instances of $d$ features }

// Large dataset with few features\;
\uIf{$ n \geq 3$K and $d \leq 100 $}{
    \Return{\ours{}\textnormal{-L100}$(\gD)$}\;
}
// High-dimensional datasets\;
\uElseIf{$d > 100 $ or ($d/n \geq 0.2$ and $n \geq 3$K) }{
    \uIf{$d \leq 1000$}{\Return{\ours{}\textnormal{-H1K}$(\gD)$}\;}
    \Else{Apply random projection to $\gD$ to reduce the number of features to 1000, yielding $\gD'$\;\Return{\ours{}-\textnormal{H1K}$(\gD')$}\;}
}
// Small datasets\;
\Else{
    \Return{\ours{}\textnormal{-S100}($\gD$)}\;
}

\caption{Conditional Model Selection}
\label{alg:tabflex}
\end{algorithm}

\begin{figure}[h]
\centering
    \includegraphics[width=0.85\linewidth]{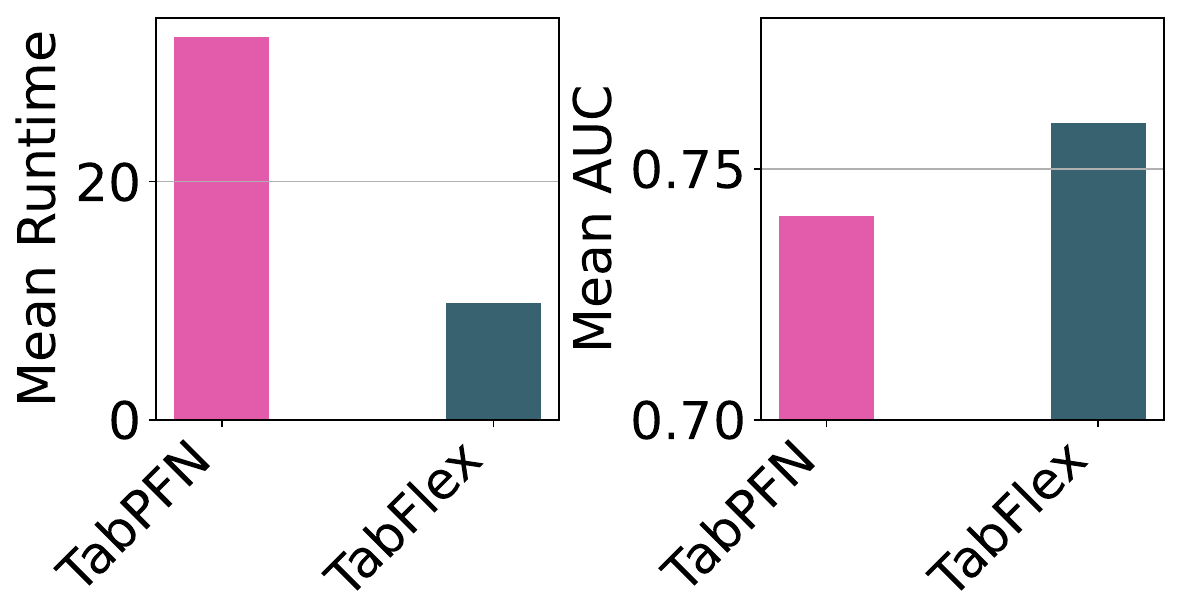}
    \vspace{-0.1in}
    \caption{Runtime and AUC comparison of \tabpfn{} and \ours{} on validation datasets.}
    \label{fig:val}
\end{figure}

\paragraph{\ours{}.}
While \tabpfn{} excels on small, simple datasets with fewer than 100 features and 10 classes, it struggles with more complex tasks, e.g., high-dimensional datasets or those with numerous classes. 
We aim to extend the use cases by training a model that maintains comparable speed to \tabpfn{} while offering reasonable performance across a broader spectrum of datasets. 
Since models trained on large, high-dimensional datasets often struggle in small regions due to optimization issues, we introduce three specialized models to address this limitation.
\begin{itemize}[leftmargin=*]
\item \textbf{\ours{}-S100} is trained on prompts with 1152 length (same as \tabpfn{}), 100 features, and 10 classes. 
This is optimized for low-dimensional datasets. `S' denotes standard configuration, `100' indicates feature capacity.
\item \textbf{\ours{}-L100} utilizes prompts of 50K length, 100 features, and 10 classes. This is designed for large, low-dimensional datasets. `L' signifies a larger sample size, and `100' represents feature count.
\item \textbf{\ours{}-H1K} employs prompts of 50K length, 1K features, and 100 classes. 
This is suited for large, high-dimensional datasets. `H' indicates high-dimensional capabilities, and `1K' denotes 1K features.
\end{itemize}
Additional training details, including training loss, hyperparameters, and other relevant information, are provided in Section~\ref{app:train}.
Our code is available at \url{https://github.com/microsoft/ticl}.

We apply the conditional model selection strategy, shown in Alg.~\ref{alg:tabflex}, to select the model based on the target dataset's size and dimensionality, ensuring optimal performance across diverse data characteristics.
The decision thresholds align with the training regimes of the models. 
\ours{}-S100, sharing TabPFN’s training setup but with an updated architecture, is deployed similarly $(n \leq 3K, d \leq 100)$. 
\ours{}-L100, trained on low-dimensional $(d \leq 100)$ but larger datasets, is used for longer sequences $(n \geq 3K, d \leq 100)$. 
\ours{}-H1K, trained on high-dimensional data, is assigned to handle those cases accordingly.
We note that performance is not highly sensitive to the chosen decision boundaries, supported by our results in Section~\ref{app:sensitivity}.

In Fig.~\ref{fig:val}, we visualize the mean runtime and mean AUC comparison of \tabpfn{} and \ours{} on the validation datasets, comprising 40 datasets with varying sample sizes (up to 100K), dimensions (up to 3K), and number of classes (up to 100). 
Detailed information about these datasets is provided in Section~\ref{app:val}. Our analysis reveals that \ours{} not only exhibits superior performance but also demonstrates faster execution times compared to \tabpfn{}.

\section{Performance Evaluation of \ours{}}\label{sec:exp}
We evaluate \ours{}'s performance and speed across 115 OpenML tabular datasets~\citep{openml}. 
See Section~\ref{App:experiment_setup} for the complete list of baselines, and a detailed description of the models' implementation.

\subsection{Experimental Setup}
Unless otherwise stated, we follow the identical experiment setup of \citet{mcelfresh2023tabzilla} for baseline benchmarking. 

\vspace{-.1in}
\paragraph{Datasets.} 
For classification tasks, we consider datasets of two difficulty levels.
For simpler tasks, we use two collections of datasets—98 and 57 in total—originally reported in Tables 1 and 2 of \citet{mcelfresh2023tabzilla}, which are characterized by smaller sample sizes and lower feature dimensions.
For more challenging tasks, we evaluate the methods on the TabZilla hard benchmark, which includes 36 difficult datasets—11 of which are high-dimensional (100–2000 features) and large-scale ($\geq$ 50K instances).
Detailed dataset information, including names and characteristics, is provided in Section~\ref{app:tabzilla}, with additional datasets and results available in Section~\ref{app:large}.

\paragraph{Baselines.}
We evaluate our approach against a comprehensive set of baselines, as considered by \citet{mcelfresh2023tabzilla}, including 
(i) four classical methods, e.g., Random Forest \citep{liaw2002classification}, 
(ii) three Gradient Boosted Decision Trees (GBDT) methods, e.g., XGBoost \citep{chen2016xgboost}, 
(iii) ten Non-Transformer Neural Network (Non-TF NN) methods, e.g.,  SAINT \citep{somepalli2021saint},
(iv) four Transformer (TF) methods, e.g., \tabpfn{}, with 
two recent methods designed for scaling tabular classification: TuneTables~\citep{feuer2024tunetables}, a TF method, and HyperFast~\citep{bonet2024hyperfast}, a Non-TF NN method.

\begin{table*}[t]
    \centering
    \resizebox{0.9\textwidth}{!}{
\begin{tabular}{l|l|cc|cc|rr}
\toprule
 \multirow{2}{*}{\textbf{Algorithm}} & \multirow{2}{*}{\textbf{Class}} & \multicolumn{2}{c|}{\textbf{Mean AUC}} & \multicolumn{2}{c|}{\textbf{Std. AUC}} & \multicolumn{2}{c}{\textbf{Time / 1000 inst.}} \\
 \cmidrule{3-8}
 & & median & mean & mean & median & median & mean \\
\midrule
TabPFN~\citep{hollmann2023tabpfn} & \tf{} & 0.97 & 0.90 & 0.21 & 0.15 & 0.82 & 1.04 \\
\rowcolor[gray]{0.9} \textbf{\ours{} (Ours)} & \tf{} & 0.96 & 0.89 & 0.22 & 0.16 & 0.29 & 0.48 \\
CatBoost~\citep{prokhorenkova2018catboost} & GBDT & 0.95 & 0.89 & 0.23 & 0.16 & 2.59 & 19.51 \\
ResNet~\citep{resnet} & \nontf{} & 0.93 & 0.84 & 0.24 & 0.16 & 13.90 & 23.40 \\
SAINT~\citep{somepalli2021saint} & \tf{} & 0.93 & 0.84 & 0.24 & 0.20 & 173.63 & 195.16 \\
RandomForest~\citep{liaw2002classification} & Classical & 0.92 & 0.86 & 0.24 & 0.17 & 0.45 & 0.61 \\
XGBoost~\citep{chen2016xgboost} & GBDT & 0.91 & 0.86 & 0.24 & 0.18 & 0.49 & 0.95 \\
HyperFast~\citep{bonet2024hyperfast} & \nontf{} & 0.91 & 0.83 & 0.22 & 0.17 & 64.38 & 136.74 \\
DANet~\citep{chen2022danets} & \nontf{} & 0.89 & 0.80 & 0.25 & 0.19 & 67.70 & 78.21 \\
SVM~\citep{cortes1995support} & Classical & 0.87 & 0.75 & 0.28 & 0.22 & 0.71 & 87.84 \\
NODE~\citep{popov2019neural} & \nontf{} & 0.86 & 0.80 & 0.24 & 0.18 & 157.18 & 194.07 \\
DeepFM~\citep{guo2017deepfm} & \nontf{} & 0.86 & 0.79 & 0.28 & 0.27 & 5.48 & 5.95 \\
FTTransformer~\citep{gorishniy2021revisiting} & \tf{} & 0.84 & 0.78 & 0.25 & 0.21 & 25.40 & 33.34 \\
LightGBM~\citep{ke2017lightgbm} & GBDT & 0.83 & 0.76 & 0.28 & 0.21 & 0.25 & 0.67 \\
MLP-rtdl~\citep{gorishniy2021revisiting} & \nontf{} & 0.83 & 0.74 & 0.26 & 0.20 & 12.65 & 22.97 \\
LinearModel~\citep{cox1958regression} & Classical & 0.81 & 0.71 & 0.27 & 0.21 & 0.05 & 0.06 \\
TuneTables~\citep{feuer2024tunetables} & \tf{} & 0.80 & 0.72 & 0.32 & 0.24 & 53.48 & 113.49 \\
STG~\citep{yamada2020feature} & \nontf{} & 0.79 & 0.67 & 0.29 & 0.23 & 18.46 & 21.26 \\
TabTransformer~\citep{huang2020tabtransformer} & \tf{} & 0.79 & 0.64 & 0.24 & 0.16 & 19.04 & 32.84 \\
MLP~\citep{mlp} & \nontf{} & 0.72 & 0.65 & 0.29 & 0.25 & 17.83 & 27.67 \\
DecisionTree~\citep{quinlan1986induction} & Classical & 0.63 & 0.55 & 0.35 & 0.31 & 0.01 & 0.02 \\
KNN~\citep{cover1967nearest} & Classical & 0.62 & 0.56 & 0.30 & 0.25 & 0.03 & 0.03 \\
TabNet~\citep{arik2021tabnet} & \tf{} & 0.56 & 0.50 & 0.42 & 0.40 & 34.66 & 42.09 \\
VIME~\citep{yoon2020vime} & \nontf{} & 0.49 & 0.48 & 0.37 & 0.27 & 18.43 & 20.11 \\
NAM~\citep{agarwal2021neural} & \nontf{} & 0.33 & 0.38 & 0.38 & 0.31 & 147.30 & 341.58 \\
\bottomrule
\end{tabular}}
    \caption{
    \textbf{Performance of algorithms across 57 datasets of size less than or equal to 1250 (used in Table 2 of \citet{mcelfresh2023tabzilla}). }
    The reported AUC values are normalized.
    The ``Time/1000 inst.'' column represents the combined training and test time for all datasets, divided by the total number of samples.
    Notably, \ours{} achieves top-2 performance, with significantly faster runtimes compared to baselines of similar performance, and a 2$\times$ speedup relative to \tabpfn{}.
}
    \label{tab:tabzilla2}
\end{table*}

\subsection{Evaluation on Simple Datasets}\label{sec:tabzilla_simple}

We evaluate models on two sets of data: 98 simple datasets from Table 1 and 57 small datasets from Table 2 of \citet{mcelfresh2023tabzilla}.
The results are reported in Table~\ref{tab:tabzilla1} (Section~\ref{App:sec_detailed_result}) and Table~\ref{tab:tabzilla2}, respectively.
For each dataset, we consider ten different train/test splits, computing the score mean and standard deviation, as well as the total runtime per 1000 instances.
We then calculate the median and mean of these values across the entire set of datasets: 98 simple datasets for Table~\ref{tab:tabzilla1} and 57 small datasets for Table~\ref{tab:tabzilla2}.
Algorithms are ranked based on AUC and time.
Our results demonstrate that \ours{} achieves nearly identical performance to \tabpfn{} on small, simple datasets while offering more than a 2$\times$ speedup.
Compared to faster methods, such as Decision Tree and Linear Model in Table~\ref{tab:tabzilla1}, and Decision Tree, Linear Model, LightGBM, and KNN in Table~\ref{tab:tabzilla2}, their performance is significantly inferior to \ours{}. 

\subsection{Evaluation on Hard Datasets}\label{sec:tabzilla_hard}
\begin{figure}
    \centering
    \includegraphics[width=0.95\linewidth]{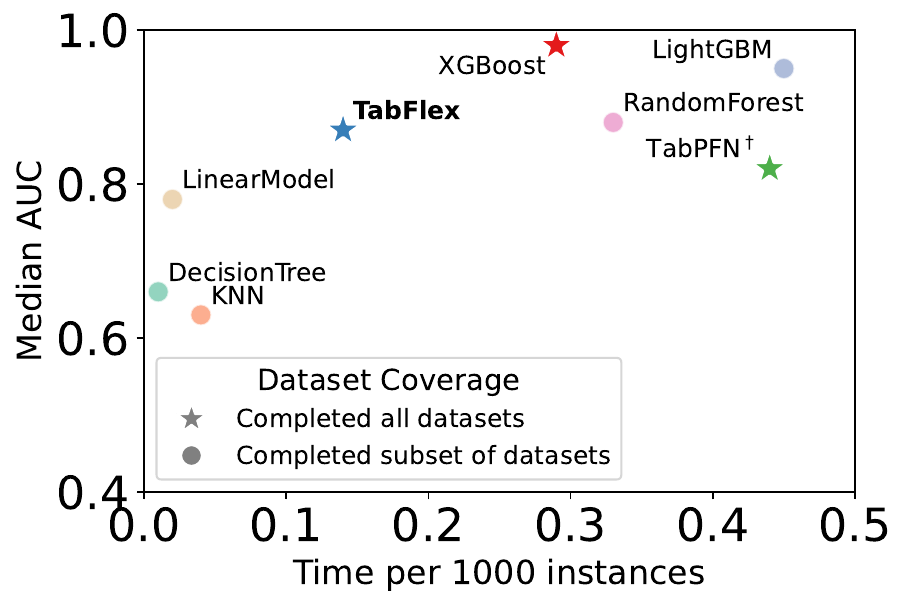}
    \captionsetup{skip=3pt}
    \vspace{-.05in}
    \caption{
    \textbf{Visualization of tested methods with processing times under 0.5 seconds per 1000 instances on the TabZilla hard benchmark.}
    We report the median AUC across the completed datasets, as several methods completed only a subset of the datasets.
    Compared to other methods (XGBoost and \tabpfn{}) that successfully ran on all datasets, \ours{} achieves a 2$\times$ speedup while maintaining relatively good performance.
    }
    \label{fig:tabzilla4}
    \vspace{-.1in}
\end{figure}
In this experiment, we compare \ours{} to baselines on the TabZilla hard benchmark~\citep{mcelfresh2023tabzilla}, which includes 36 datasets.
However, due to the challenging nature of the datasets in the TabZilla hard benchmark, many baselines fail to execute successfully.
In Fig.~\ref{fig:tabzilla4}, we visualize the Median AUC and the runtime per 1000 instances across the 36 datasets, with methods that successfully executed on all datasets marked as stars, and methods that failed to execute on some datasets marked as circles.
This figure focuses on efficient methods, excluding those slower than 0.5 seconds per 1000 instances.
We observe that only \ours{}, \tabpfn{}, and XGBoost successfully run on all datasets.
Notably, \ours{} is faster and achieves better performance than \tabpfn{}, and is faster than XGBoost while sacrificing only a small margin of performance.

\begin{table*}[t]
    \centering
\resizebox{0.95\textwidth}{!}{
\begin{tabular}{l|rrr|ccc|rrr}
\toprule
\multirow{2}{*}{\textbf{Dataset}} & \multirow{2}{*}{\textbf{\#Classes}} & \multirow{2}{*}{\textbf{\#Features}} & \multirow{2}{*}{\textbf{\#Instances}} & \multicolumn{3}{c|}{\textbf{AUC}} & \multicolumn{3}{c}{\textbf{Time (seconds)}}  \\ \cmidrule{5-10}
&&&& 5th Best & \tabpfn{}  & \ours{} & 5th Best & \tabpfn{} & \ours{} \\ 
\midrule
SpeedDating & 2 & 120 & 8378 & 0.86 & 0.55 & 0.85 & 1.58 & 1.58 & 1.89 \\
higgs & 2 & 28 & 98050 & 0.79 & 0.72 & 0.76 & 3.46 & 2.82 & 4.92 \\
cnae-9 & 9 & 856 & 1080 & 1.00 & 0.48 & 0.96 & 0.51 & 0.51 & 3.80 \\
albert & 2 & 78 & 425240 & 0.71 & 0.69 & 0.70 & 33.98 & 9.39 & 13.46 \\
audiology & 24 & 69 & 226 & 0.92 & 0.82 & 0.81 & 0.13 & 0.23 & 0.26 \\
jasmine & 2 & 144 & 2984 & 0.86 & 0.70 & 0.86 & 0.68 & 1.27 & 0.99 \\
nomao & 2 & 118 & 34465 & 0.99 & 0.76 & 0.99 & 4.03 & 1.82 & 5.34 \\
Bioresponse & 2 & 1776 & 3751 & 0.85 & 0.50 & 0.75 & 2.49 & 1.29 & 12.38 \\
MiniBooNE & 2 & 50 & 130064 & 0.98 & 0.98 & 0.97 & 10.80 & 3.19 & 7.22 \\
airlines & 2 & 7 & 539383 & 0.70 & 0.63 & 0.64 & 6.53 & 9.73 & 4.20 \\
\rowcolor[gray]{0.9} poker-hand & 10 & 10 & 1025009 & 0.54 & 0.72 & 0.84 & 504.52 & 15.36 & 4.88 \\
\bottomrule
\end{tabular}
}
 \caption{
 \textbf{Performance comparison of \ours{}, \tabpfn{}, and other baselines on large, high-dimensional datasets from the TabZilla hard benchmark~\citep{mcelfresh2023tabzilla}. }
Baseline results are summarized by the 5th highest AUC and the 5th lowest runtime for each dataset. 
\ours{} significantly outperforms \tabpfn{} on these datasets, achieving comparable performance to other baselines while maintaining exceptional speed.
}
    \label{tab:tabzilla4}
\end{table*}

Next, we focus on 11 high-dimensional and large datasets within the TabZilla hard benchmark.
Since most baselines do not obtain complete results for all datasets, instead of comparing \ours{} to a specific baseline, we report the 5th-best AUC and 5th-best runtime, using these values to summarize the general performance distribution of the baselines.
The results are presented in Table~\ref{tab:tabzilla4}.
We observe that, for these datasets, \ours{} substantially outperforms \tabpfn{}.
While \tabpfn{} follows \citet{mcelfresh2023tabzilla}'s strategy of using only 3000 training samples, \ours{} utilizes all available training data, achieving superior performance with comparable or slightly higher processing times.
\ours{} exhibits competitive performance among baselines while maintaining high efficiency.
Notably, on large datasets with more than 50K instances, \ours{} is significantly faster than the baselines.
For instance, on the largest dataset, \textit{poker-hand}, containing over one million samples, \ours{} significantly outperforms other baselines, classifying all samples in just 4.88 seconds, while the fifth fastest method requires more than 500 seconds.

\begin{table}[t]
\centering
\resizebox{\linewidth}{!}{
\begin{tabular}{lccc}
\toprule
\textbf{Dataset} & \textbf{\ours{}} & \textbf{Linear Regression} & \textbf{XGBoost} \\
\midrule
cpu\_act & 0.9622 & 0.7661 & 0.9872 \\
pol & 0.7770 & 0.4471 & 0.9876 \\
elevators & 0.7386 & 0.8336 & 0.8984 \\
wine\_quality & 0.1966 & 0.2842 & 0.4398 \\
Ailerons & 0.7284 & 0.8137 & 0.8272 \\
houses & 0.6803 & 0.6496 & 0.8469 \\
house\_16H & 0.2519 & 0.1708 & 0.5276 \\
diamonds & 0.9085 & 0.9213 & 0.9477 \\
Brazilian\_houses & 0.8943 & 0.3459 & 0.9828 \\
Bike\_Sharing\_Demand & 0.3796 & 0.3291 & 0.6995 \\
nyc-taxi-green-dec-2016 & 0.1547 & 0.3109 & 0.5732 \\
house\_sales & 0.6656 & 0.7375 & 0.8732 \\
sulfur & 0.4026 & 0.3068 & 0.7497 \\
medical\_charges & 0.8173 & 0.8118 & 0.9790 \\
MiamiHousing2016 & 0.8112 & 0.7302 & 0.9306 \\
superconduct & 0.6867 & 0.7169 & 0.9086 \\
yprop\_4\_1 & 0.0000 & 0.0449 & 0.0000 \\
abalone & 0.3689 & 0.4622 & 0.5125 \\
\bottomrule
\end{tabular}}
\caption{
\textbf{Comparison of performance across regression tasks for \ours{}, Linear Regression, and XGBoost.}
Regression datasets are from \citet{grinsztajn2022tree}.
To extend \ours{} to regression, we discretize the target variable into 10 and 100 uniform bins and use the better-performing setting, converting the task to classification.
Despite being designed for classification, 
\ours{} delivers reasonable performance on regression tasks.}
\label{tab:regression_results}
\end{table}

\subsection{Extension to Regression Tasks}
So far, we have presented our evaluation on classification tasks.
For the extension to regression, a simple workaround is to convert the task into classification by discretizing the target range into bins.
Shown in Table~\ref{tab:regression_results}, we apply this technique for regression data with numerical features from \citet{grinsztajn2022tree}, where targets are discretized into 10 and 100 uniform bins.
For baselines, we use linear regression and the XGBoost Regressor (100 estimators, max depth 6), both with default parameters from the \texttt{scikit-learn} package.
Although regression is not the primary goal of \ours{}, it demonstrates reasonable performance.

Furthermore, we compare \ours{} with TuneTables~\citep{feuer2024tunetables} using their setup in Sec.~\ref{app:tunetables} and extend \ours{} to image datasets, shown in Section~\ref{app:image}.

\section{Ablation Studies}\label{sec:ablation}
We conduct ablation studies, including a fine-grained comparison with XGBoost and the integration of other data-efficient techniques. Performance and runtime trends with respect to training sample sizes, along with detailed experimental setups, are provided in Section~\ref{app:ablation}.

\subsection{Fine-Grained Comparison with XGBoost}
In Fig.~\ref{fig:tabzilla4}, we observe a larger performance gap between \ours{} and XGBoost compared to the simpler datasets shown in Table~\ref{tab:tabzilla2}.
To better understand this discrepancy, we conduct a more fine-grained comparison between \ours{} and XGBoost using synthetic datasets.
XGBoost is configured with a tree depth of 3 and 20 estimators to balance speed and accuracy.
We evaluate the accuracy-runtime tradeoff across varying feature dimensions (${100, 200, 400, 600, 800, 1000}$) and training sample sizes (${1000, 2000, \ldots, 12000}$), averaging results over 20 synthetic datasets with diverse distributions.
\ours{} consistently outperforms XGBoost in both accuracy and inference time when the feature dimensionality is below 800.
As the number of features increases, the performance gap narrows, and XGBoost eventually surpasses \ours{} at 800 features.
Nevertheless, \ours{} achieves a stronger overall tradeoff across most settings.

\begin{figure}
\centering
    \begin{subfigure}[b]{0.33\linewidth}
        \includegraphics[width=\linewidth]{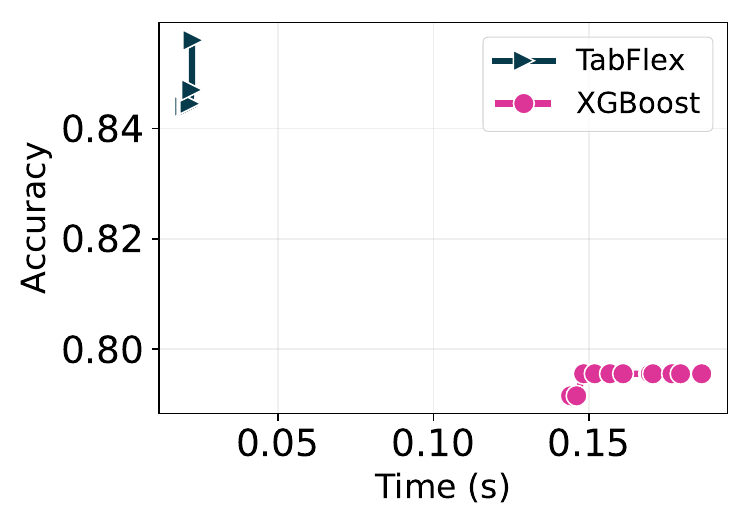}
        \caption{100 features}
    \end{subfigure}\hfill
    \begin{subfigure}[b]{0.33\linewidth}
        \includegraphics[width=\linewidth]{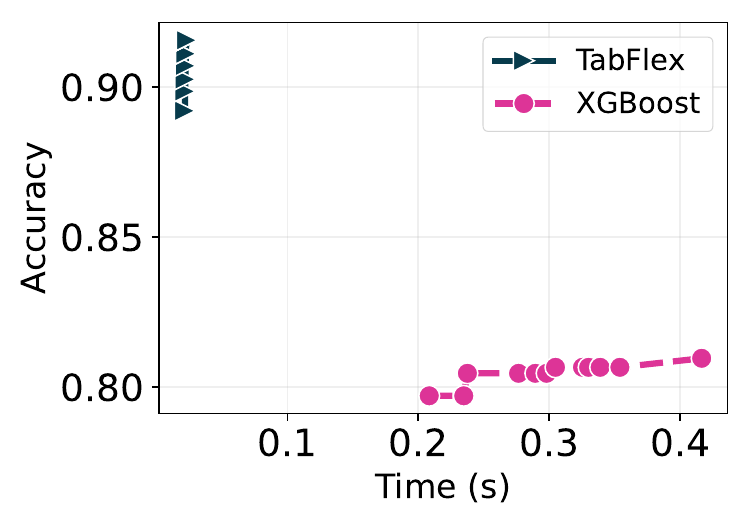}
        \caption{200 features}
    \end{subfigure}\hfill
    \begin{subfigure}[b]{0.33\linewidth}
        \includegraphics[width=\linewidth]{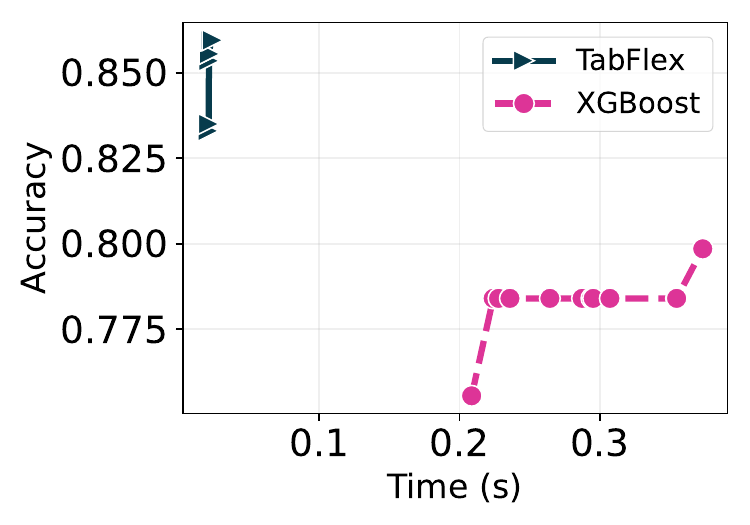}
        \caption{400 features}
    \end{subfigure}\hfill
    \begin{subfigure}[b]{0.33\linewidth}
        \includegraphics[width=\linewidth]{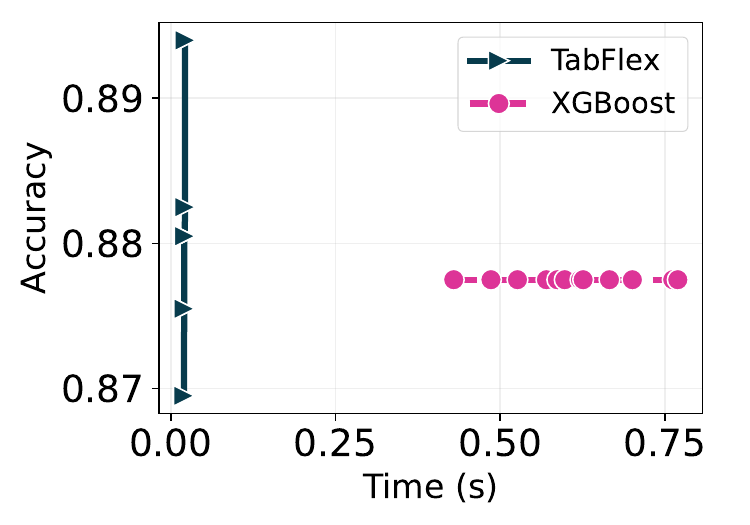}
        \caption{600 features}
    \end{subfigure}\hfill
    \begin{subfigure}[b]{0.33\linewidth}
        \includegraphics[width=\linewidth]{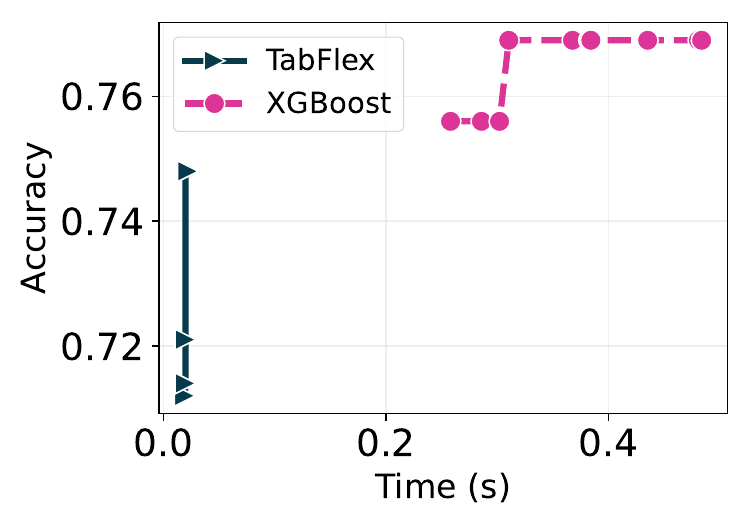}
        \caption{800 features}
    \end{subfigure}\hfill
    \begin{subfigure}[b]{0.33\linewidth}
        \includegraphics[width=\linewidth]{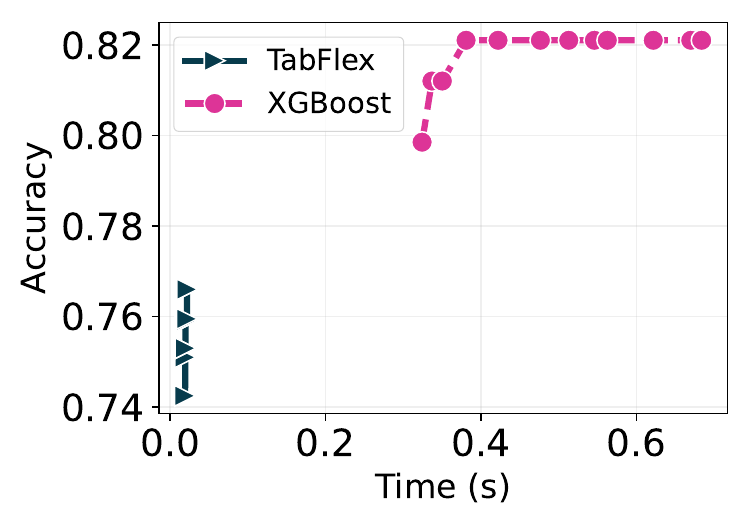}
        \caption{1000 features}
    \end{subfigure}
    \vspace{-0.1in}
    \caption{
        \textbf{Accuracy-runtime tradeoff of \ours{} and XGBoost across feature dimensions (${100, 200, 400, 600, 800, 1000}$) and training sample sizes (${1000, 2000, \ldots, 12000}$).}
        Results are averaged over 20 synthetic datasets with diverse data distributions.
        \ours{} consistently achieves better performance and faster inference than XGBoost when the number of features is below 800.
        As dimensionality increases, the gap diminishes, with XGBoost overtaking \ours{} at 800 features.
    }
    \label{fig:speedup}
\end{figure}

\subsection{Incorporating Data-Efficient Techniques: Dimensionality Reduction and Data Sampling}
\label{sec:exp_ab}

Since \ours{} utilizes the ICL for prediction, reducing the complexity of the data can further improve the inference efficiency.
We combine \ours{} with two data-efficient techniques (feature dimension reduction and training data sampling) and investigate how they affect the balance between efficiency and classification performance. See Section~\ref{App:dataset_ab} for detailed setup and datasets.

First, for dimensionality reduction, we apply three techniques: Principal components analysis (PCA)~\cite{mackiewicz1993principal}, Singular Value Decomposition (SVD), and random linear projection~\cite{vempala2005random}.
Fig.~\ref{fig:ab_dim_reduction} presents the performance and latency of \ours{} where feature dimensions are varied.
Datasets are selected from Table~\ref{tab:dataset_tabzilla1} where feature dimensions are greater than 100.
On average, dimensions can be reduced to 10\% to reduce latency more than 2 times while AUC scores remain the same or even better than the original predictions (100\%).

Similarly, we conduct ablation studies on training data size with different sampling methods (K-centers, K-medoids, uncertainty sampling, and random sampling), shown in Fig.~\ref{fig:ab_sampling}.
The tested datasets are selected from Table~\ref{tab:dataset_tabzilla1} where
the data size is greater than 1000 instances and the feature dimension is lower than 100. 
Fig.~\ref{fig:ab_sampling} illustrates that the original performance can be preserved with only 20\% of training data while the latency can be significantly reduced.

\begin{figure}[t]
    \centering
    \includegraphics[width=0.49\linewidth]{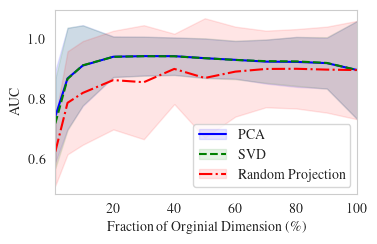}
    \includegraphics[width=0.49\linewidth]{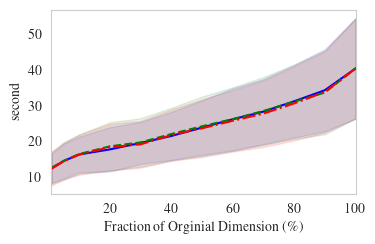}
    \vspace{-0.1in}
    \caption{\textbf{Varying feature dimension with dimensionality reduction methods.} Results are measured on a set of datasets whose number of features is greater than 100. Dimension of features can be reduced up to 90\% to preserve the performance (left) with inference being $2\times$ faster (right).}
    \label{fig:ab_dim_reduction}
\end{figure}

\begin{figure}[t]
    \centering
    \includegraphics[width=0.49\linewidth]{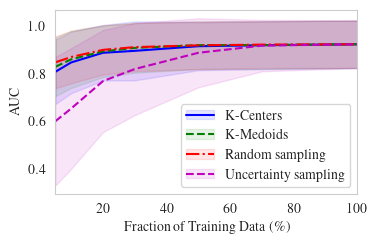}
    \includegraphics[width=0.49\linewidth]{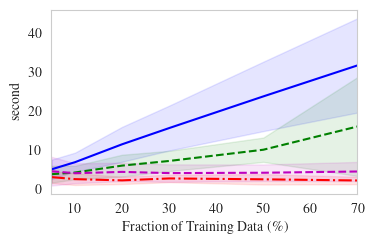}
    \vspace{-0.05in}
    \caption{\textbf{Varying training data with sampling methods.} Results are measured on a set of datasets whose data size is greater than 1000 and whose dimension is lower than 100. For most datasets, only 20\% of training data may be required to preserve the performance (left), while significantly reducing the latency (right).
    }
    \label{fig:ab_sampling}
\end{figure}

\section{Conclusion \& Discussion} 
\paragraph{Conclusion.}
To extend \tabpfn{} for ICL on larger and more challenging tabular classification tasks, in this paper, we conduct a comprehensive exploration of scalable alternatives to attention, ultimately selecting non-causal linear attention.
Through computational analysis for algorithmic optimization of the implementation of linear attention, we develop our model, \ours{}.
We demonstrate that \ours{} achieves comparable performance to \tabpfn{} on small datasets with more than 2$\times$ speedup, while outperforming most other baselines with significantly reduced computational time.
Moreover, \ours{} significantly outperforms \tabpfn{} on larger and more complex datasets, becoming much faster than most other baselines on datasets larger than 100K samples, while maintaining performance on par with state-of-the-art methods.
We posit that \ours{} further elevates the performance ceiling of neural network-based models on tabular classification tasks.

\paragraph{Limitations \& Future Works.}
While our method achieves fast inference and competitive performance on datasets with around two thousand features, scaling to even larger feature spaces remains a compelling research direction. 
In this paper, we also explored adapting \ours{} to image classification tasks on small-scale datasets such as MNIST and CIFAR-10. 
Extending this approach to large-scale image classification could broaden its applicability, especially given its extremely fast inference and ability to generate predictions for all test samples simultaneously.
Further extending \ours{} to other modalities such as audio and video is another promising direction. 
This may require new strategies for synthetic data generation for pretraining, along with systematic analyses of the effects of architectural hyperparameters such as depth and embedding size. These efforts would help optimize the model for high-dimensional inputs and enhance its generalization across domains.
Currently, our focus on tabular tasks is limited to classification.
We attempted a naive extension to regression by discretizing the target range into bins and treating it as a classification problem.
A promising future direction is to extend \ours{} to regression tasks using a more principled approach, such as training on regression-specific synthetic data.
Lastly, TabPFNv2 is a concurrent work that further improves the performance of \tabpfn{}.
Investigating how incorporating linear attention might impact TabPFNv2 is also an interesting question for future research.

\newpage

\section*{Impact Statement}
This research tackles a key limitation in applying \tabpfn{} to tabular classification tasks: scalability.
By introducing \ours{}, which leverages linear attention, it enables efficient processing of tabular datasets with millions of samples and thousands of features.
The primary impact lies in making in-context learning feasible for a broader range of large-scale tabular problems, which is particularly useful in domains such as finance and recommendation systems.
Furthermore, the successful integration of linear attention in \ours{} lays the groundwork for future studies on efficient attention mechanisms and model architectures for tabular and other non-NLP domains, as supported by promising preliminary results in image classification.

\bibliography{references}
\bibliographystyle{icml2025}

\newpage
\appendix
\onecolumn
\newpage
\onecolumn
\appendix
{\LARGE \textbf{Appendix}} \par 

\makeatletter
\def\addcontentsline#1#2#3{%
  \addtocontents{#1}{\protect\contentsline{#2}{#3}{\thepage}{\@currentHref}}%
}
\makeatother
\startcontents[sections]
\printcontents[sections]{ }{1}{}
\newpage

\section{Extended Section~\ref{sec:related_work}: Related Works}\label{app:related_works}
\paragraph{Classical Machine Learning Approaches for Tabular Classification.}
Classical machine learning algorithms have long been the foundation of tabular data classification. 
These methods include k-Nearest Neighbors (KNN)~\citep{cover1967nearest}, Logistic Regression~\citep{cox1958regression}, Decision Trees~\citep{quinlan1986induction}, and Support Vector Machines (SVM)~\citep{cortes1995support}. 
These classical models, while effective, often struggle to handle complex, high-dimensional tabular datasets, motivating the development of more sophisticated approaches. 

\paragraph{Gradient-Boosting Decision Trees for Tabular Classification}
Gradient-boosting decision trees (GBDTs)~\citep{friedman2001greedy} have emerged as a cornerstone in tabular classification, owing to their exceptional ability to capture intricate patterns in structured data. 
GBDTs refine their outputs to minimize errors by iteratively combining predictions from weak learners, resulting in high predictive accuracy. 
XGBoost~\citep{chen2016xgboost} introduced weighted quantile sketching, advanced regularization techniques, and sparsity awareness, achieving state-of-the-art performance. 
LightGBM~\citep{ke2017lightgbm}, a computationally efficient GBDT implementation, employs Gradient-based One-Side Sampling and a leaf-wise tree growth strategy. 
CatBoost~\citep{prokhorenkova2018catboost} leverages symmetric trees and introduces ordered boosting, with a particular emphasis on effectively handling categorical features. 
These advancements have rendered GBDTs not only powerful but also versatile tools in the domain of tabular data, dominating tabular classification in terms of both speed and performance until the advent of \tabpfn{}.

\paragraph{Transformer-based Approaches for Tabular Classification.}
Recent years have witnessed numerous attempts to employ Transformers for tabular classification~\citep{arik2021tabnet,huang2020tabtransformer,gorishniy2021revisiting,dinh2022lift,hollmann2023tabpfn}. 
These methods utilize Transformers in diverse ways to tackle tabular data. 
TabNet~\citep{arik2021tabnet}, one of the pioneering efforts, applies unsupervised pre-training on masked tabular datasets to infer missing features, thereby enhancing the model's understanding of datasets and features. 
It then performs supervised learning on feature selection to obtain the final decision boundary, akin to decision trees. 
\citet{huang2020tabtransformer} introduced TabTransformer, which leverages Transformers to better handle categorical features by concatenating their contextual embeddings with numerical features. 
While TabTransformer processes categorical and continuous features separately, SAINT \citep{somepalli2021saint} projects both feature types into a shared embedding space before passing them through transformer blocks, thereby enhancing overall performance.
FT-Transformer~\citep{gorishniy2021revisiting} introduces a feature tokenizer to convert each example into a sequence of embeddings, enabling Transformers to process tabular datasets and make predictions. 
LIFT~\citep{dinh2022lift} utilizes a pre-trained language model with parameter-efficient fine-tuning, incorporating task descriptions and converting each sample into a complete sentence with feature names in the prediction prompt.
TabR~\citep{gorishniy2024tabr} proposes a retrieval-augmented model with a custom kNN-like component to retrieve and extract signals from the nearest neighbors. 
BiSHop~\citep{xu2024bishop} establishes interconnected directional learning modules to process data column-wise and row-wise for tabular learning.
XTab~\citep{zhu2023xtab} utilizes independent featurizers and federated learning to resolve inconsistent column types and quantities.

\tabpfn{}~\citep{hollmann2023tabpfn} is trained offline on synthetic datasets derived from prior distributions and performs ICL rather than additional parameter tuning for a given dataset, enabling it to solve small tabular classification tasks within seconds.
Prior to our work, TuneTable~\citep{feuer2024tunetables} extended \tabpfn{} to scale to large datasets by performing prefix-tuning for each dataset to achieve better performance.
MixturePFN~\citep{xu2025mixturepfn} improves scalability by routing new test samples to a pool of scalable prompters using Sparse Mixture of In-Context Prompters, while LoCalPFN~\citep{thomas2024retrieval} proposes retrieving a local subset of task-specific data for efficiently fine-tuning on. 
\citet{ma2024context} introduce in-context data distillation to optimize TabPFN’s context and remove the data size constraint.
TuneTable~\citep{feuer2024tunetables} scales \tabpfn{} to large datasets by performing a prefix tuning per dataset.
TabPFNv2~\citep{tabpfnv2} enhances TabPFN’s accuracy in low-data regimes (fewer than 10,000 samples), complementing our focus on speed and scalability.
Our method is also based on \tabpfn{}, extending its scalability to large datasets while maintaining and improving efficiency by simply replacing the softmax attention with linear attention.

\paragraph{Attention Mechanisms and Scalable Alternatives.}
While attention in Transformers~\citep{vaswani2017attention} is central to the strong performance of language models, it encounters scaling challenges for long sequences due to its quadratic computational and memory complexity. To overcome these limitations, several scalable alternatives have been proposed~\citep{gu2024mamba,dao2024transformers,katharopoulos2020transformers,peng2023rwkv,orvieto2023resurrecting,sun2023retentive}, all aiming to achieve subquadratic time complexity.
In contrast, classical RNNs provide the advantage of efficient linear-time inference but suffer from limitations in training efficiency, lacking the parallelization capabilities of Transformer architectures. 
Linear attention~\citep{katharopoulos2020transformers} addresses both concerns by reformulating self-attention as a linear dot-product of kernel feature maps, reducing the computational complexity from quadratic to linear time. 
Additionally, causal linear attention can be interpreted as a form of RNN, as the model makes predictions based on a current token and a ``hidden state,'' which summarizes information from the previous tokens. 
State-space models (SSMs), another popular variant of RNNs, address the drawbacks of classical RNNs by considering linear RNNs and proposing novel algorithms for efficient training~\citep{gu2021combining,gu2022efficiently,gu2024mamba,dao2024transformers,peng2023rwkv,orvieto2023resurrecting,sun2023retentive}. 

\citet{dao2022flashattention} identified that another bottleneck in attention mechanisms' speed stems from the relatively slow access to high-bandwidth memory (HBM) in GPUs.
To address this limitation, FlashAttention~\citep{dao2022flashattention,dao2024flashattention,shah2024flashattention} restructures attention computation to optimize the utilization of high-speed on-chip SRAM while minimizing access to slower HBM, thereby enhancing the efficiency of GPU-based attention operations.
FlashAttention strategically balances computational efficiency against memory bandwidth efficiency.
Although the computational complexity in terms of sequence length remains quadratic, the optimizations introduced by FlashAttention significantly accelerate attention computation in wall-clock time.

\paragraph{Non-Transformer Neural Network-based Approaches for Tabular Classification.}
Non-Transformer neural networks, such as Multi-Layer Perceptrons (MLP)~\citep{mlp}, were explored for tabular classification long before Transformer-based methods, but their performance was limited. In recent years, several novel neural network techniques have been developed for this task, including ResNet~\citep{resnet}, DANet~\citep{chen2022danets}, NODE~\citep{popov2019neural}, DeepFM~\citep{guo2017deepfm}, STG~\citep{yamada2020feature}, VIME~\citep{yoon2020vime}, and NAM~\citep{agarwal2021neural}. DeepFM~\citep{guo2017deepfm} employs a factorization machine-based neural network to learn from categorical data. Drawing inspiration from CatBoost, \citet{popov2019neural} presents a novel neural network architecture designed specifically for tabular data, named Neural Oblivious Decision Ensembles (NODE). While self- and semi-supervised learning have demonstrated effectiveness in the domains of computer vision and natural language processing, \citet{yoon2020vime} proposed Value Imputation and Mask Estimation (VIME), which represents the first attempt to address tabular tasks using a self- and semi-supervised learning framework. \cite{agarwal2021neural} proposed the Neural Additive Model (NAM), an interpretable neural network that maintains strong performance on tabular data. \citet{yamada2020feature} proposed a feature selection method using stochastic gates (STG), which is a neural network-based and effective approach for tabular data. \citet{chen2022danets} designed an abstract layer, a specialized neural component for tabular data, and proposed Deep Abstract Networks (DANets) by stacking these layers.

Some approaches even replace Transformers with SSMs for tabular learning~\citep{ahamed2024mambatab,thielmann2024mambular}. However, these methods require training on a per-dataset basis, leading to high computational costs, and they are generally slower than GBDTs for tabular classification tasks.

\paragraph{Linear Attention for In-Context Learning.} Although linear attention has been reported to underperform in some language modeling tasks~\citep{you2024when,zhang2024the,qin2022devil}, recent theoretical work demonstrates its effectiveness in in-context learning scenarios, where it can emulate gradient descent to achieve learning during inference~\citep{ahn2023transformers}.

\section{Supplement to Section~\ref{sec:exploration}: Architectural Exploration for Scalable Tabular Learning}\label{app:exploration}
\subsection{Comparison with Other Attention Variant}\label{app:attention}
In addition to the broad categories of all linear RNN variant models we studied in this paper, we also consider another mechanism that enjoys linear complexity: sliding window attention~\citep{beltagy2020longformer}. We show that \ours{} achieves significantly better performance.

\begin{table}[h!]
    \centering
    \begin{tabular}{l|ccc|cc}
        \toprule
        \textbf{Method} & \textbf{\#Class} & \textbf{\#Features} & \textbf{\#Instances} & \textbf{Sliding Window} & \textbf{Linear (Ours)} \\
        \midrule
        Poker-Hand & 10 & 10 & 1,025,009 & 0.48 & \textbf{0.84} \\
        Airlines   & 2  & 7  & 539,383   & 0.48 & \textbf{0.64} \\
        Higgs      & 2  & 28 & 98,050    & 0.39 & \textbf{0.76} \\
        \bottomrule
    \end{tabular}
    \caption{Performance comparison of \ours{} with Sliding Window attention.}
\end{table}

\subsection{Supplementary Visualizations}
Fig.~\ref{fig:attn_explore_detailed} provides additional visualizations to complement the analysis in Fig.~\ref{fig:attn_explore}. 
It includes both TabPFN-Causal-Masked and TabPFN-Mamba variants in the settings shown in Fig.~\ref{fig:causal} and Fig.~\ref{fig:mamba}, as the two experiments highlight different aspects of model behavior. 
To improve the interpretability of performance differences between softmax and linear attention across datasets, we also include a density plot illustrating the changes in runtime and accuracy when the softmax operator is replaced with linear attention.
\begin{figure}[t]
    \centering
    \begin{subfigure}[b]{0.49\textwidth}
        \includegraphics[width=\linewidth]{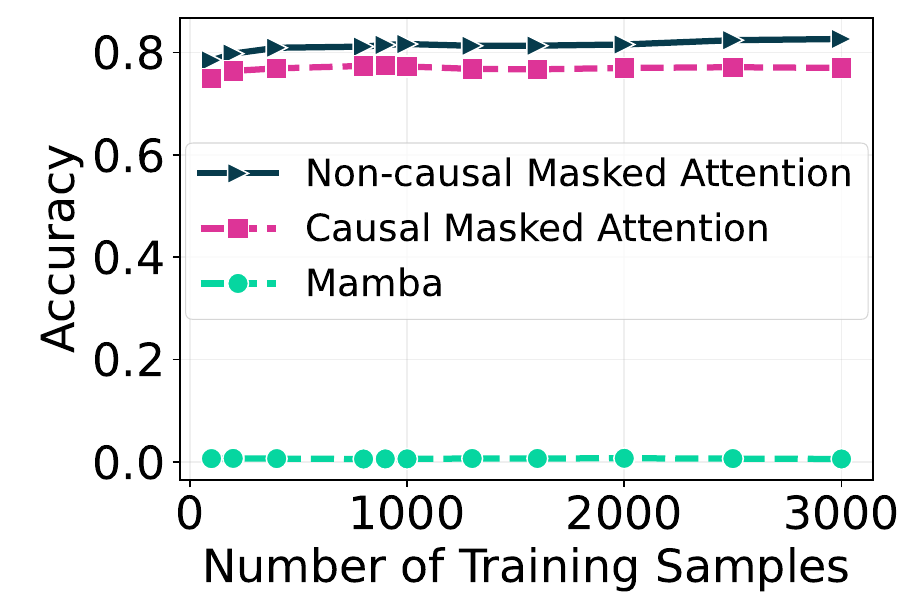}
    \caption{
        \textbf{Effect of causal masking}. When more samples are provided, the non-causal model shows better sample utilization and accuracy, while the causal model's performance plateaus early and declines.
        The Mamba variant performs poorly.
    }
    \end{subfigure}\hfill
    \begin{subfigure}[b]{0.49\textwidth}
        \includegraphics[width=0.9\linewidth]{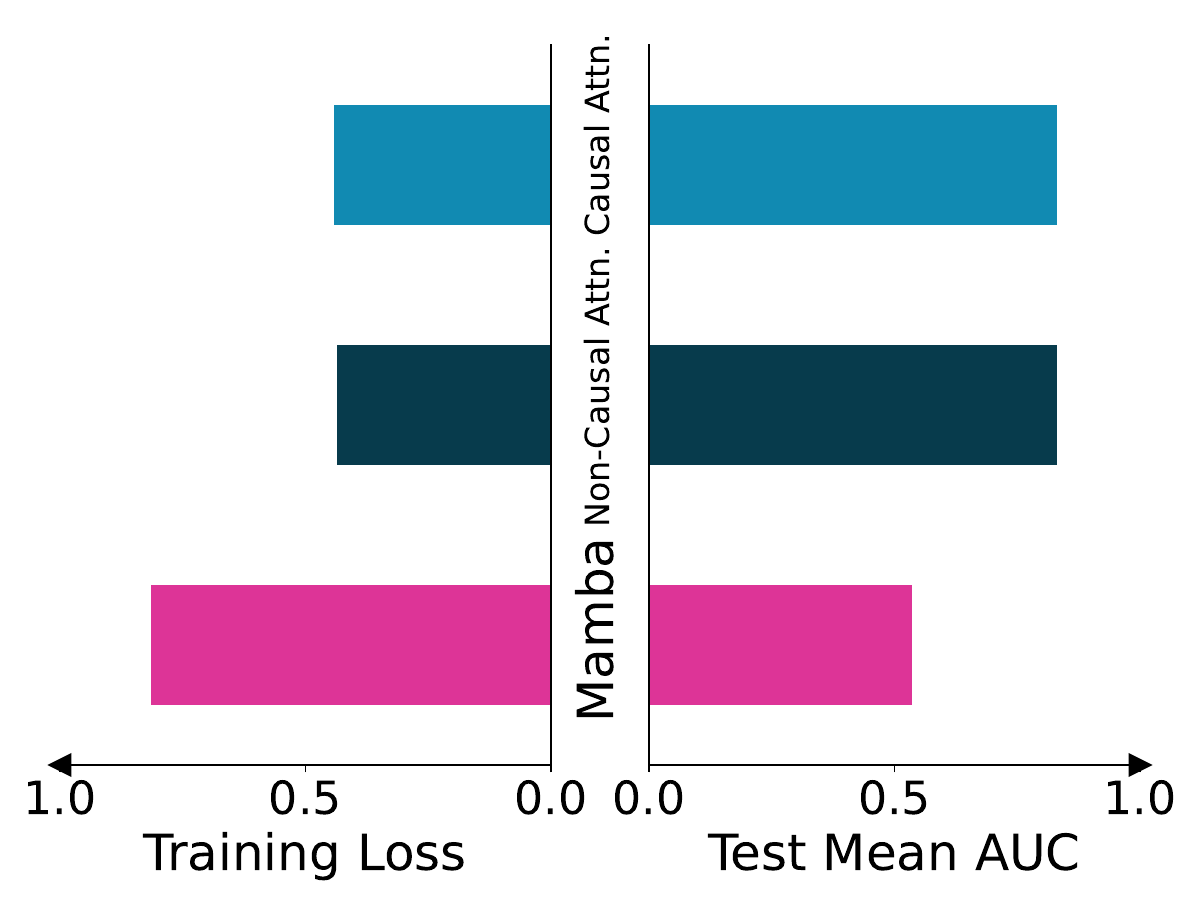}
    \caption{
       \textbf{ICL performance comparison between Mamba and Transformer models}. Results show that Transformer-based models achieve lower training loss and higher AUC across 150 test datasets.
    }
    \end{subfigure}\hfill
    \begin{subfigure}[b]{\textwidth}
        \includegraphics[width=\linewidth]{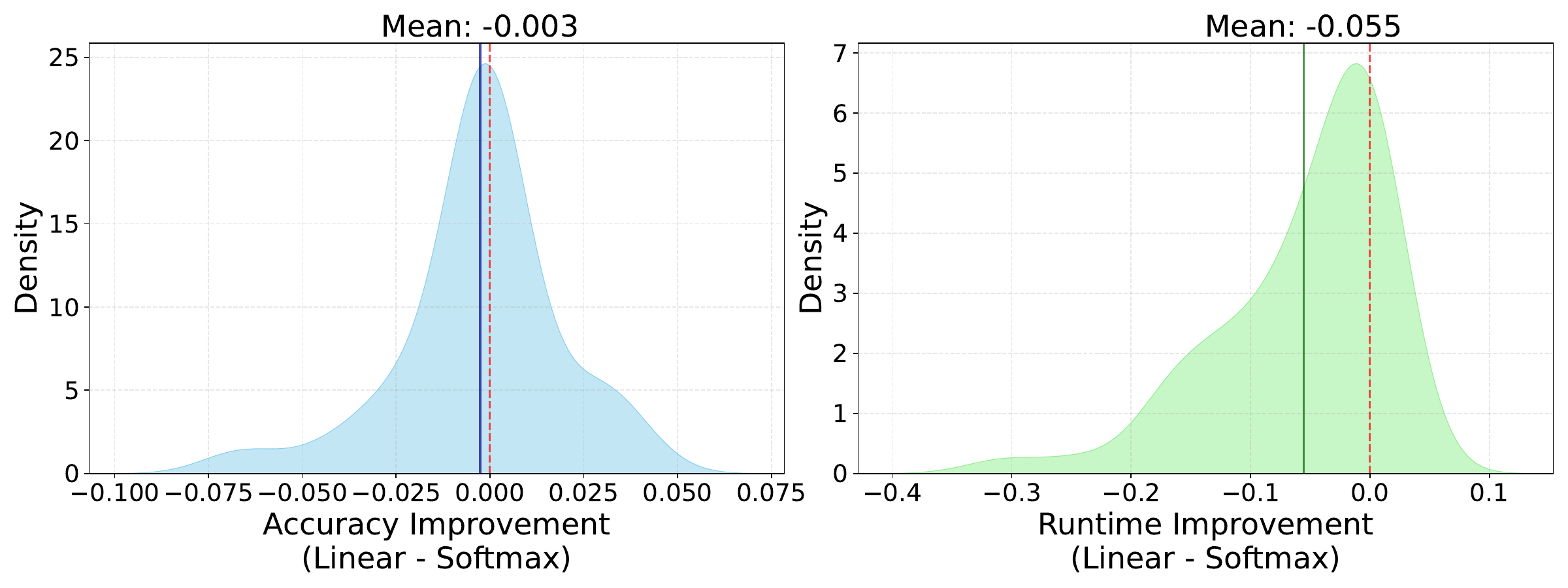}
    \caption{
        \textbf{Accuracy and runtime comparison of softmax and linear attention}. 
        Replacing softmax with linear attention preserves comparable accuracy while significantly reducing runtime.
    }
    \end{subfigure}
    \vspace{-.1in}
    \caption{Impact of model architecture on tabular classification performance.
    Detailed version of Fig.~\ref{fig:attn_explore}.
    }
    \label{fig:attn_explore_detailed}
    \vspace{-.1in}
\end{figure}

\section{Supplement to Section~\ref{sec:ours}: \ours{}}\label{app:ours}
In this section, we provide detailed insights into \ours{}, including training procedures, validation dataset selection, and analyses such as the sensitivity of model selection thresholds on final performance.

\subsection{Computation Analysis of Various Attention Mechanism}\label{app:complexity}
In this part, we provide a computational analysis of various attention mechanisms, comparing standard attention, FlashAttention (specifically FlashAttention-I~\citep{dao2022flashattention}), causal FlashLinearAttention (referred to as FlashLinearAttention in \citet{yang2024gla}), and non-causal linear attention.
To clarify, FlashLinearAttention is designed to reduce HBM access specifically for causal linear attention.
For notational simplicity, we use the term ``linear attention'' to refer to non-causal linear attention.
For both linear attention and FlashLinearAttention, we assume an element-wise kernel feature mapping.
The assumption is reasonable, as popular kernel feature mappings like $\operatorname{elu}(\cdot)+1$ (used here) and $\operatorname{ReLU}(\cdot)$ are also element-wise.
We denote this mapping as $\phi$.

\begin{algorithm}[H]
\SetAlgoLined
\KwIn{Matrices $\mQ, \mK, \mV \in \sR^{N\times D}$ in HBM, on-chip SRAM of size $M$}
Set block size $B$\;
Initialize $\mO = (0)_{N\times D} \in \sR^{N\times D}$ in HBM\;
Divide $\mQ$ into $T = \lceil\frac{N}{B}\rceil$ blocks $\mQ_1, \ldots, \mQ_T$ of size $B\times D$ each, and divide $\mK, \mV$ into $T = \lceil\frac{N}{B}\rceil$ blocks $\mK_1, \ldots, \mK_T$ and $\mV_1 \ldots \mV_T$ of size $B\times D$ each\;
Divide $\mO$ into $T$ blocks $\mO_1, \ldots, \mO_T$ of size $B \times D$ each\;
On on-chip SRAM, construct causal mask, $\mM \in \sR^{B\times B}$\;
On SRAM, initialize $\mS = (0)_{D\times D} \in \sR^{D\times D}$\;
\For{$1 \leq j \leq T$}{
    Load $\mK_j, \mV_j, \mQ_j, \mO_j$ from HBM to on-chip SRAM\;
    On chip, compute $\mK_j \leftarrow \phi(\mK_j)$\;
    On chip, compute $\mQ_j \leftarrow \phi(\mQ_j)$\;
    Write $\mO_j \leftarrow \mQ_j\mS + ((\mQ_j \mK_j^\top) \odot \mM) \cdot \mV_j$ to HBM\;
    On-chip, compute $\mS \leftarrow \mS + \mK_j^\top\mV_j$\;
}
\KwOut{$\mO$}
\caption{Causal FlashLinearAttention Implementation~\citep{yang2024gla}}
\label{alg:fla}
\end{algorithm}

We evaluate these mechanisms based on their High Bandwidth Memory (HBM) access, memory requirements, and floating-point operations (FLOP) when computing attention outputs given query, key, and value inputs.
While \citet{dao2022flashattention} have provided computations for standard attention and FlashAttention, we focus our analysis on causal FlashLinearAttention (detailed in Alg.~\ref{alg:fla}) and HBM-efficient non-causal linear attention (developed by us and detailed in Alg.~\ref{alg:linear_attn}) in Section~\ref{app:hbm_la}.
In practice, we employ a simplified PyTorch implementation of linear attention and demonstrate its efficiency, as it only causes marginal increases in HBM access and memory usage as we demonstrate in Section~\ref{app:pytorch_la}.
Furthermore, we present visualizations in Section~\ref{app:pytorch_la} that illustrate the time and CUDA memory consumption of these attention mechanisms across various sequence lengths and scenarios.

\begin{algorithm}[H]
\SetKwInOut{Input}{Input}
\Input{Matrices $\mQ, \mK, \mV \in \sR^{N\times D}$ in HBM, on-chip SRAM of size $M$}
Set block size $B$\;
Initialize $\mO = (0)_{N\times D} \in \sR^{N\times D}$ in HBM\;
Divide $\mQ$ into $T = \lceil\frac{N}{B}\rceil$ blocks $\mQ_1, \ldots, \mQ_T$ of size $B\times D$ each,
and divide $\mK, \mV$ into $T = \lceil\frac{N}{B}\rceil$ blocks $\mK_1, \ldots, \mK_T$ and $\mV_1, \ldots, \mV_T$ of size $B\times D$ each\;
Divide $\mO$ into $T$ blocks $\mO_1, \ldots, \mO_T$ of size $B \times D$ each\;
On on-chip SRAM, initialize $\mS = (0)_{D\times D} \in \sR^{D\times D}$\;
\For{$1 \leq i \leq T$}{
    Load $\mK_i, \mV_i$\;
    On chip, compute $\mK_i \leftarrow \phi(\mK_i)$ \;
    On chip, compute $\mS \leftarrow \mS + \mK_i^\top\mV_i$\;
}
\For{$1 \leq j \leq T$}{
    Load $\mQ_j, \mO_j$\;
    On chip, compute $\mQ_j \leftarrow \phi (\mQ_j)$ \;
    Write $\mO_j \leftarrow \mQ_j\mS$ to HBM\;
}
\KwOut{$\mO$}
\caption{HBM-Efficient Implementation of Linear Attention}
\label{alg:linear_attn}
\end{algorithm}

\subsubsection{HBM-Efficient Linear Attention}\label{app:hbm_la}
Here, we analyze the number of HBM accesses, HBM memory, and FLOPS required by FlashLinearAttention (Alg.~\ref{alg:fla}) and linear attention (Alg.~\ref{alg:linear_attn}).

\begin{lemma}\label{lemma:hbm_access}
    Let $\mQ, \mK, \mV \in \sR^{N \times D}$ represent the query, key, and value matrices for a single attention head, where $N$ is the sequence length and $D$ is the embedding size. 
    Both FlashLinearAttention (Alg.~\ref{alg:fla}) and linear attention (Alg.~\ref{alg:linear_attn}) require $5ND$ HBM accesses to compute the attention output. 
\end{lemma}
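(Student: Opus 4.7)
The plan is to directly enumerate, algorithm by algorithm, every load from and store to HBM and sum the resulting byte counts, using the crucial fact that the accumulator $\mS \in \sR^{D\times D}$ is declared as an on-chip SRAM variable and therefore contributes nothing to the HBM tally. Since each algorithm is a simple tiling over blocks of shape $B\times D$ and the number of tiles is $T = \lceil N/B \rceil$, summing a per-iteration cost of $O(BD)$ across $T$ iterations will always collapse to an integer multiple of $ND$, so the whole proof is really arithmetic once the loads and stores are classified correctly.

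For causal FlashLinearAttention (Alg.~\ref{alg:fla}), I would observe that its single loop performs, in each iteration $j$, four reads of $B\times D$ tiles from HBM, namely $\mQ_j$, $\mK_j$, $\mV_j$, $\mO_j$, followed by exactly one $B\times D$ write back of $\mO_j$. Because the $T$ tiles of each name partition a full $N\times D$ matrix, aggregating over $j$ converts each per-iteration $BD$ access into $ND$. Summing the four reads and one write then yields $4ND + ND = 5ND$ HBM accesses. The kernel map $\phi$ is applied on chip to the already-loaded tiles and thus induces no extra traffic, and the causal mask $\mM \in \sR^{B\times B}$ is constructed on SRAM, hence also free.

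For non-causal linear attention (Alg.~\ref{alg:linear_attn}), I would split the analysis across the two sequential passes. The first pass accumulates $\mS = \sum_i \phi(\mK_i)^\top \mV_i$ by loading $\mK_i$ and $\mV_i$ once each and writing nothing, which contributes $2ND$ accesses. The second pass materializes $\mO_j = \phi(\mQ_j)\mS$ by loading $\mQ_j$ and $\mO_j$ and writing $\mO_j$ back, for a further $3ND$ accesses. Adding the two passes gives $5ND$ in aggregate, matching FlashLinearAttention exactly.

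The step that demands real care, rather than arithmetic, is justifying that treating $\mS$ as an SRAM resident (and hence invisible to the HBM count) is actually legal, i.e., that a block size $B$ can be chosen so all per-iteration scratch—the tile $B\times D$, the kernelized tile $\phi(\mK_j)$ or $\phi(\mQ_j)$, the state $\mS \in \sR^{D\times D}$, and in the causal case the mask $\mM \in \sR^{B\times B}$—fits within the SRAM budget $M$. This amounts to checking $\Theta(BD + D^2) \le M$ (plus $B^2$ for the mask), which is the same working hypothesis underlying the original FlashAttention analysis of \citet{dao2022flashattention} and is the only non-mechanical content of the proof. Once this sizing assumption is noted, the remainder is the per-tile accounting above.
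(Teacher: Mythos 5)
Your accounting is exactly the paper's own proof: four $B\times D$ loads plus one $B\times D$ store per tile for Alg.~\ref{alg:fla}, and a $2ND$ first pass plus $3ND$ second pass for Alg.~\ref{alg:linear_attn}, each summing to $5ND$ over $T=\lceil N/B\rceil$ tiles. Your additional remark that one must be able to choose $B$ so that the tiles, $\mS\in\sR^{D\times D}$, and the mask fit in the SRAM budget $M$ is a sensible hypothesis the paper leaves implicit, but it does not change the argument.
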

\begin{proof}[Proof of Lemma~\ref{lemma:hbm_access}]
For causal FlashLinearAttention (Alg.~\ref{alg:fla}):
\begin{itemize}[leftmargin=*]
    \item Line 8: Loading $\mK_j, \mV_j, \mQ_j, \mO_j$ necessitates $4BD$ HBM accesses.
    \item Line 11: Writing $\mO_j$ requires $BD$ HBM accesses.
\end{itemize}
These operations are executed $T$ times, where $T = \lceil \frac{N}{B} \rceil$. Thus, the total HBM accesses are:
\[
5BD \cdot T = 5BD \cdot \lceil \frac{N}{B} \rceil = 5ND.
\]

For non-causal linear attention (Alg.~\ref{alg:linear_attn}):
\begin{itemize}[leftmargin=*]
    \item Line 7: Loading $\mK_i, \mV_i$ requires $2BD$ HBM accesses.
    \item Line 11: Loading $\mQ_j, \mO_j$ requires $2BD$ HBM accesses.
    \item Line 13: Writing $\mO_j$ requires $BD$ HBM accesses.
\end{itemize}
These operations are also repeated $T$ times, where $T = \lceil \frac{N}{B} \rceil$. Consequently, the total HBM accesses are:
\[
5BD \cdot T = 5BD \cdot \lceil \frac{N}{B} \rceil = 5ND.
\]

Therefore, we conclude that both causal FlashLinearAttention and non-causal linear attention require $5ND$ HBM accesses to compute the attention output.
\end{proof}

\begin{lemma}\label{lemma:hbm_memory}
     Let $\mQ, \mK, \mV \in \sR^{N \times D}$ represent the query, key, and value matrices for a single attention head, where $N$ is the sequence length and $D$ is the embedding size. 
    Both FlashLinearAttention (Alg.~\ref{alg:fla}) and linear attention (Alg.~\ref{alg:linear_attn}) require $4ND$ HBM memory to compute the attention output. 
\end{lemma}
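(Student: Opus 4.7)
The plan is to do a careful inventory of every tensor that actually resides in HBM during the execution of each algorithm and sum their sizes. Since the preceding Lemma~\ref{lemma:hbm_access} has already separated the roles of HBM-resident and SRAM-resident objects, the HBM memory accounting reduces to identifying which quantities are persistently stored in HBM versus only temporarily materialized in on-chip SRAM.

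First I would handle causal FlashLinearAttention (Alg.~\ref{alg:fla}). The inputs $\mQ, \mK, \mV \in \sR^{N \times D}$ are supplied in HBM by assumption, contributing $3ND$ entries. On line~2 the algorithm initializes the output matrix $\mO \in \sR^{N \times D}$ in HBM, contributing an additional $ND$. Every other tensor introduced in the algorithm is explicitly on-chip: the causal mask $\mM \in \sR^{B \times B}$ constructed on line~6, the running state $\mS \in \sR^{D \times D}$ accumulated on line~12, and the blockwise slices $\mQ_j, \mK_j, \mV_j, \mO_j$ loaded on line~8 live in SRAM and are never spilled back to HBM (except for the writes of $\mO_j$ into the already-allocated $\mO$ on line~11, which do not allocate new HBM). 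Summing yields a total HBM footprint of $3ND + ND = 4ND$.

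The argument for non-causal linear attention (Alg.~\ref{alg:linear_attn}) is structurally identical: $\mQ, \mK, \mV$ account for $3ND$ in HBM, the $\mO$ matrix initialized on line~2 accounts for $ND$, and the state $\mS \in \sR^{D \times D}$ together with every blockwise load and intermediate kernel-feature transform lives on-chip in SRAM throughout both the key–value accumulation loop and the subsequent query projection loop. The writes on line~13 target the pre-existing $\mO$, requiring no additional HBM. Hence again the total is $4ND$.

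There is no real technical obstacle here; the only thing to be careful about is not double-counting the on-chip $D \times D$ state $\mS$ or the transient $B \times D$ block buffers as HBM memory, and verifying that no line of either algorithm secretly allocates a superlinear intermediate in HBM. Both algorithms were designed precisely so that the only $O(ND)$-sized objects in HBM are $\mQ, \mK, \mV, \mO$, which gives the stated bound.
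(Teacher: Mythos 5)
Your proposal is correct and follows essentially the same argument as the paper: the only HBM-resident tensors in either algorithm are $\mQ,\mK,\mV$ ($3ND$) and the output $\mO$ ($ND$), giving $4ND$ in total. Your additional care in confirming that $\mS$, the causal mask, and the block buffers all stay in SRAM is a sound (and slightly more explicit) elaboration of the same accounting the paper performs.
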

\begin{proof}[Proof of Lemma~\ref{lemma:hbm_memory}]
For both algorithms:
\begin{itemize}[leftmargin=*]
    \item Storing $\mQ, \mK, \mV$ requires $3ND$ memory.
    \item Storing $\mO$ requires $ND$ memory.
\end{itemize}
Total HBM memory usage: $4ND$.
\end{proof}

\begin{lemma}\label{lemma:flops}
     Let $\mQ, \mK, \mV \in \sR^{N \times D}$ represent the query, key, and value matrices for a single attention head, where $N$ is the sequence length and $D$ is the embedding size. 
    Both FlashLinearAttention (Alg.~\ref{alg:fla}) and linear attention (Alg.~\ref{alg:linear_attn}) require $O(ND^2)$ FLOPS to compute the attention output. 
\end{lemma}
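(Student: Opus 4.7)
The plan is to prove this lemma by direct FLOP accounting of each algorithm, mirroring the bookkeeping approach already used for HBM access (Lemma~\ref{lemma:hbm_access}) and HBM memory (Lemma~\ref{lemma:hbm_memory}). In both algorithms the work decomposes cleanly into per-block matrix products whose dimensions are known, so the proof reduces to summing these contributions over $T=\lceil N/B\rceil$ block iterations and checking that the dominant term is $ND^2$.

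For Alg.~\ref{alg:linear_attn} (non-causal linear attention), I would first bound the work in the key/value accumulation loop: applying the element-wise kernel $\phi$ to $\mK_i\in\sR^{B\times D}$ costs $O(BD)$, and forming $\mK_i^\top \mV_i$ and adding it to $\mS\in\sR^{D\times D}$ costs $O(BD^2)$. Summed over the $T$ iterations this gives $O(TBD^2)=O(ND^2)$. The second loop is symmetric: applying $\phi$ to $\mQ_j$ is $O(BD)$, and the matrix product $\mQ_j\mS$ with $\mQ_j\in\sR^{B\times D}$, $\mS\in\sR^{D\times D}$ costs $O(BD^2)$ per block, again giving $O(ND^2)$ in total. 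Adding the two loops yields the claimed $O(ND^2)$.

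For Alg.~\ref{alg:fla} (causal FlashLinearAttention) the per-block work is slightly richer because each block must also handle its intra-block causal contribution. The $\phi$ applications cost $O(BD)$ each. The matrix product $\mQ_j\mS$ costs $O(BD^2)$, the intra-block similarity $\mQ_j\mK_j^\top$ costs $O(B^2D)$, applying the mask is $O(B^2)$, multiplying by $\mV_j$ costs $O(B^2D)$, and updating $\mS\leftarrow\mS+\mK_j^\top\mV_j$ costs $O(BD^2)$. Summing over $T=\lceil N/B\rceil$ blocks gives a total of $O(ND^2+NBD)$. The main obstacle in the proof is this extra $NBD$ term: one must justify that it is absorbed into $O(ND^2)$. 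I would handle this by invoking the standard FlashAttention block-size convention that the block size $B$ is chosen so that $B=O(D)$ (since $B$ must fit in on-chip SRAM alongside $\mS\in\sR^{D\times D}$, forcing $B\le M/D$ with $M=\Theta(D^2)$ in practice). Under this convention $NBD=O(ND^2)$, and combining with Alg.~\ref{alg:linear_attn} concludes the lemma.

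Apart from choosing $B$ correctly, the only other subtlety is making sure no $O(N^2)$ hidden cost enters; the crucial observation is that the accumulator $\mS$ collapses the sequence-length dependence into a single $D\times D$ state, so all sequence-length work appears only through the outer loop over blocks and never through an $N\times N$ product. Once that is noted, the FLOP counts sum linearly in $T$ and the $O(ND^2)$ bound follows cleanly for both algorithms.
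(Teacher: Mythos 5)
Your proposal is correct and follows essentially the same route as the paper: per-block FLOP accounting summed over $T=\lceil N/B\rceil$ iterations, with the $D\times D$ accumulator $\mS$ ensuring no $O(N^2)$ term can arise. You are in fact slightly more careful than the paper, whose tally for FlashLinearAttention includes an intra-block term of order $B^2D$ per iteration (hence $O(NBD)$ overall) but then asserts $O(ND^2)$ without comment — the block-size assumption $B=O(D)$ that you make explicit is exactly what is needed to close that step.
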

\begin{proof}[Proof of Lemma~\ref{lemma:flops}]
For causal FlashLinearAttention (Alg.~\ref{alg:fla}):
\begin{itemize}[leftmargin=*]
    \item Computing $\phi(\mK_j)$ and $\phi(\mQ_j)$ requires $2BD$ FLOPs.
    \item Computing $(\mQ_j\mK_j^\top) \odot \mathbf{M}$ requires $B^2(2D-1) + B^2$ FLOPs.
    \item The result of last step multiplied by $\mV_j$ requires $B^2(2D-1) + BD(2B-1)$ FLOPs.
    \item Computing $\mQ_j\mS$ requires $B \cdot D(2D-1)$ FLOPs.
    \item The addition of the last two steps requires $BD$ FLOPs.
    \item Computing $\mK_j^\top\mV_j$ (line 10) requires $(2B-1) \cdot D^2$ FLOPs.
    \item The addition of $\mS$ and the last step requires $D^2$ FLOPs.
\end{itemize}
The total number of FLOPs for one iteration is:
\begin{align}
    &2BD + B^2(2D-1) + B^2 + B^2(2D-1) + BD(2B-1) + B \cdot D(2D-1) + BD + (2B-1) \cdot D^2 + D^2\\
    &= 4B^2D + 2BD + 4BD^2.
\end{align}
These operations are repeated $T = \lceil \frac{N}{B} \rceil$ times. The total number of FLOPs is:
\begin{align}
    (4B^2D + 2BD + 4BD^2) \cdot T = O(ND^2).
\end{align}

For non-causal linear attention (Alg.~\ref{alg:linear_attn}):
\begin{itemize}[leftmargin=*]
    \item Computing $\phi(\mK_i)$ requires $BD$ FLOPs.
    \item Computing $\mS + \mK_i^\top \mV_i$ (line 9) requires $2BD^2$ FLOPs.
    \item Computing $\phi(\mQ_j)$ (line 12) requires $BD$ FLOPS.
    \item Computing $\mQ_j\mS$ (line 13) requires $(2D-1)BD$ FLOPs.
\end{itemize}
These operations are repeated $T = \lceil \frac{N}{B} \rceil$ times. The total number of FLOPs is:
\begin{align}
    (BD + 4BD^2) \cdot T = O(ND^2).
\end{align}

Thus, we conclude that both algorithms require $O(ND^2)$ FLOPs to compute the attention output.
\end{proof}

\subsubsection{Simplified PyTorch Implementation of Linear Attention}\label{app:pytorch_la}
In our implementation, we adopt a straightforward PyTorch approach to linear attention rather than an HBM-efficient method.
We employ the concise two-line implementation presented in Listing~\ref{listing:linear_attn}.
In the following lemma, we demonstrate that this straightforward implementation only incurs a marginal increase in HBM accesses and HBM memory usage.

\begin{lstlisting}[language=Python, caption=Straightforward PyTorch implementation of linear attention~\citep{katharopoulos2020transformers}., label={listing:linear_attn}]
def linear_attn(q, k, v): 
    """
    q: (batch, heads, seq_q, dim_qk)
    k: (batch, heads, seq_kv, dim_qk)
    v: (batch, heads, seq_kv, dim_v)
    """
    kv = torch.einsum("bhnd,bhnm->bhdm", k, v)
    o = torch.einsum("bhld,bhdm->bhlm", q, kv)
    return o.contiguous()
\end{lstlisting}

\setcounter{temp_theorem_counter}{\value{theorem}} 
\setcounter{theorem}{\numexpr\getrefnumber{thm:computation}-1\relax} 
\begin{theorem}
     Let $\mQ, \mK, \mV \in \sR^{N \times D}$ represent the query, key, and value matrices for a single attention head, where $N$ is the sequence length and $D$ is the embedding size. 
    Both causal FlashLinearAttention (Alg.~\ref{alg:fla}) and non-causal linear attention (Listing~\ref{listing:linear_attn}) require $O(ND)$ HBM accesses, $O(ND)$ HBM memory, and $O(ND^2)$ FLOPS to compute the attention output. 
\end{theorem}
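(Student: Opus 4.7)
The plan is to decompose the theorem into its two independent claims — one about causal FlashLinearAttention and one about the straightforward PyTorch non-causal implementation — and verify the three resource budgets (HBM accesses, HBM memory, FLOPS) for each.

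The first claim, concerning causal FlashLinearAttention (Alg.~\ref{alg:fla}), follows immediately from the earlier lemmas: Lemma~\ref{lemma:hbm_access} gives exactly $5ND$ HBM accesses, Lemma~\ref{lemma:hbm_memory} gives $4ND$ HBM memory, and Lemma~\ref{lemma:flops} gives $O(ND^2)$ FLOPS. So this half of the proof is pure citation and requires no further work.

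For the PyTorch implementation in Listing~\ref{listing:linear_attn}, I would analyze the two \texttt{einsum} calls sequentially, together with the elementwise kernel $\phi$ (applied either before the call or absorbed into $\mQ, \mK$). Denote the steps: (i) form $\mS := \phi(\mK)^\top \mV \in \sR^{D \times D}$, and (ii) form $\mO := \phi(\mQ)\,\mS \in \sR^{N \times D}$. Applying $\phi$ costs $O(ND)$ FLOPS and streams over HBM. Step (i) is an $N$-fold sum of rank-one $D \times D$ updates, giving $O(ND^2)$ FLOPS, reading $2ND$ elements of $\mK, \mV$ and writing $D^2$ elements of $\mS$. Step (ii) is an $(N \times D) \cdot (D \times D)$ matmul, giving $O(ND^2)$ FLOPS, reading $ND + D^2$ elements and writing $ND$ elements. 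Summing, total HBM accesses and memory are $O(ND + D^2)$ and total FLOPS is $O(ND^2)$.

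To collapse $O(ND + D^2)$ to $O(ND)$, I would invoke the operating regime of \ours{}: sequence length $N$ ranges from $10^3$ up to $10^6$ while the per-head embedding dimension $D$ is a small constant (at most a few hundred), so $D \le N$ and hence $D^2 = O(ND)$. The main obstacle — minor but worth being careful about — is honest bookkeeping of the \texttt{einsum} intermediates: one must verify that the chosen contraction order never materializes an $N \times N$ tensor. Contracting the sequence axis first in step (i) (producing the $D \times D$ matrix $\mS$) and only then contracting the embedding axis in step (ii) sidesteps this pitfall, which is precisely why the naive PyTorch code already matches the HBM-efficient bounds of FlashLinearAttention and of Alg.~\ref{alg:linear_attn} without requiring explicit SRAM tiling.
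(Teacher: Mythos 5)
Your proposal is correct and, for the causal FlashLinearAttention half, is identical to the paper's proof: both simply cite Lemmas~\ref{lemma:hbm_access}, \ref{lemma:hbm_memory}, and \ref{lemma:flops}. For the PyTorch implementation (Listing~\ref{listing:linear_attn}) you take a somewhat different route. The paper never re-analyzes the two \texttt{einsum} contractions from scratch; instead it performs a delta analysis against the HBM-efficient Alg.~\ref{alg:linear_attn}, observing that the only difference is that \texttt{kv} (i.e., $\mS \in \sR^{D\times D}$) is materialized in HBM rather than kept on-chip, which adds $2D^2$ accesses and $D^2$ memory to the $5ND$ and $4ND$ counts already established by the lemmas, with FLOPS unchanged. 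Your direct accounting of the two contraction steps arrives at the same $O(ND+D^2)$ totals and the same $O(ND^2)$ FLOPS, so the two arguments are equivalent in substance; yours is self-contained for Listing~\ref{listing:linear_attn}, while the paper's reuses the lemma machinery built for Alg.~\ref{alg:linear_attn}. One point in your favor: you explicitly flag that collapsing $O(ND+D^2)$ to $O(ND)$ requires $D = O(N)$ (trivially satisfied for a per-head embedding dimension against sequence lengths up to $10^6$), a step the paper's proof leaves implicit when it asserts that the analysis ``yields the desired outcome.'' Your remark that the chosen contraction order never materializes an $N\times N$ intermediate is likewise correct and is the real reason the naive code matches the tiled algorithm; the paper does not state this explicitly.
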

\setcounter{theorem}{\value{temp_theorem_counter}}
\begin{proof}
Let us consider the implementation in Listing~\ref{listing:linear_attn} and compare it to Alg.~\ref{alg:linear_attn}.
PyTorch's optimized tensor computation ensures efficiency, with the primary distinction between Listing~\ref{listing:linear_attn} and Alg.~\ref{alg:linear_attn} being the storage of \texttt{kv} in the former, which is equivalent to $\mS \in \sR^{D\times D}$ in Alg.~\ref{alg:linear_attn}.
This results in the following changes:
\begin{itemize}[leftmargin=*]
\item HBM Accesses: By Lemma~\ref{lemma:hbm_access}, Alg.~\ref{alg:linear_attn} requires $5ND$ HBM accesses.
Due to the additional write and load operations for $\mS\in\sR^{D\times D}$, Listing~\ref{listing:linear_attn} requires $5ND+2D^2$ HBM accesses.
\item HBM Memory Usage: By Lemma~\ref{lemma:hbm_memory}, Alg.~\ref{alg:linear_attn} requires $4ND$ HBM memory usage.
Due to the additional storage requirements for $\mS\in\sR^{D\times D}$, Listing~\ref{listing:linear_attn} requires $4ND+D^2$ HBM memory usage.
\end{itemize}
The number of FLOPS remains unaffected.
The analysis above, in conjunction with Lemmas~\ref{lemma:hbm_access}, \ref{lemma:hbm_memory}, and \ref{lemma:flops}, yields the desired outcome.
\end{proof}

In Table~\ref{tab:complexity}, we summarize the \#HBM access, HBM memory, and FLOPS required by standard attention (with naive PyTorch implementation), FlashAttention-I, FlashLinearAttention (causal), and linear attention with both implementations.
\begin{table}[htbp]
\centering
\resizebox{\textwidth}{!}{
\begin{tabular}{l|ccccc}
\toprule
 & \multirow{2}{*}{\textbf{Standard Attention}} & \multirow{1}{*}{\textbf{FlashAttention}} & \multirow{1}{*}{\textbf{FlashLinearAttention}} & \multicolumn{2}{c}{\textbf{Linear Attention}} \\
 & & \citep{dao2022flashattention} & \citep{yang2024gla} & Alg.~\ref{alg:linear_attn} & Listing~\ref{listing:linear_attn}\\ 
\midrule
\# HBM access & $4N^2 + 4ND$ & $\frac{12N^2D^2}{M} + \frac{16N^2D}{M} + 2ND$ & $5ND$ & $5ND$ & $5ND+2D^2$\\
\midrule
Memory & $2N^2 + 4ND$ & $2N + 4ND$ & $4ND$ & $4ND$ & $4ND+D^2$ \\
\midrule
FLOPS & $O(N^2D)$ & $O(N^2D)$ & $O(ND^2)$ & $O(ND^2)$ & $O(ND^2)$ \\
\bottomrule
\end{tabular}}
\vspace{.1in}
\caption{
    \textbf{Comparison of memory and computational costs across different attention mechanisms. }
    FlashAttention improves the speed of standard attention by optimizing \# HBM access. 
    Flash causal linear attention takes a similar approach, achieving linear \# HBM access. 
    However, we show that non-causal linear attention already achieves linear \# HBM access, matching the efficiency of flash causal linear attention without requiring any additional optimization on \# HBM access.
}
\label{tab:complexity}
\end{table}

Subsequently, we visualize the empirical execution time and CUDA memory utilization of FlashAttention-2, FlashLinearAttention, and linear attention in Fig.~\ref{fig:time_attn} and Fig.~\ref{fig:memory_attn}, respectively. 
We vary the head dimension $\in \set{32, 64, 128, 256}$, the number of heads $\in \set{2, 4, 8, 16}$, and the sequence length $\in \set{2^4, 2^5, \ldots, 2^{15}}$. 
We focus on the self-attention case, randomly generating input (serving as key, query, and values) with a batch size of 10, and replicate the experiment 5 times.
The final values presented are aggregated across these 5 simulations.
Notably, we were unable to obtain results for FlashLinearAttention in two configurations: (1) head dimension 256 with 8 heads, and (2) head dimension 256 with 16 heads, due to illegal memory access error incurred by the PyTorch package \texttt{fla}~\citep{yang2024gla}.
Our observations from the figures indicate that both runtime and CUDA memory usage of FlashLinearAttention and linear attention exhibit linear growth with respect to sequence length, aligning with the predictions of Theorem~\ref{thm:computation}.

\begin{figure}[htbp]
    \centering
    \begin{subfigure}[b]{0.49\textwidth}
    \includegraphics[width=\linewidth]{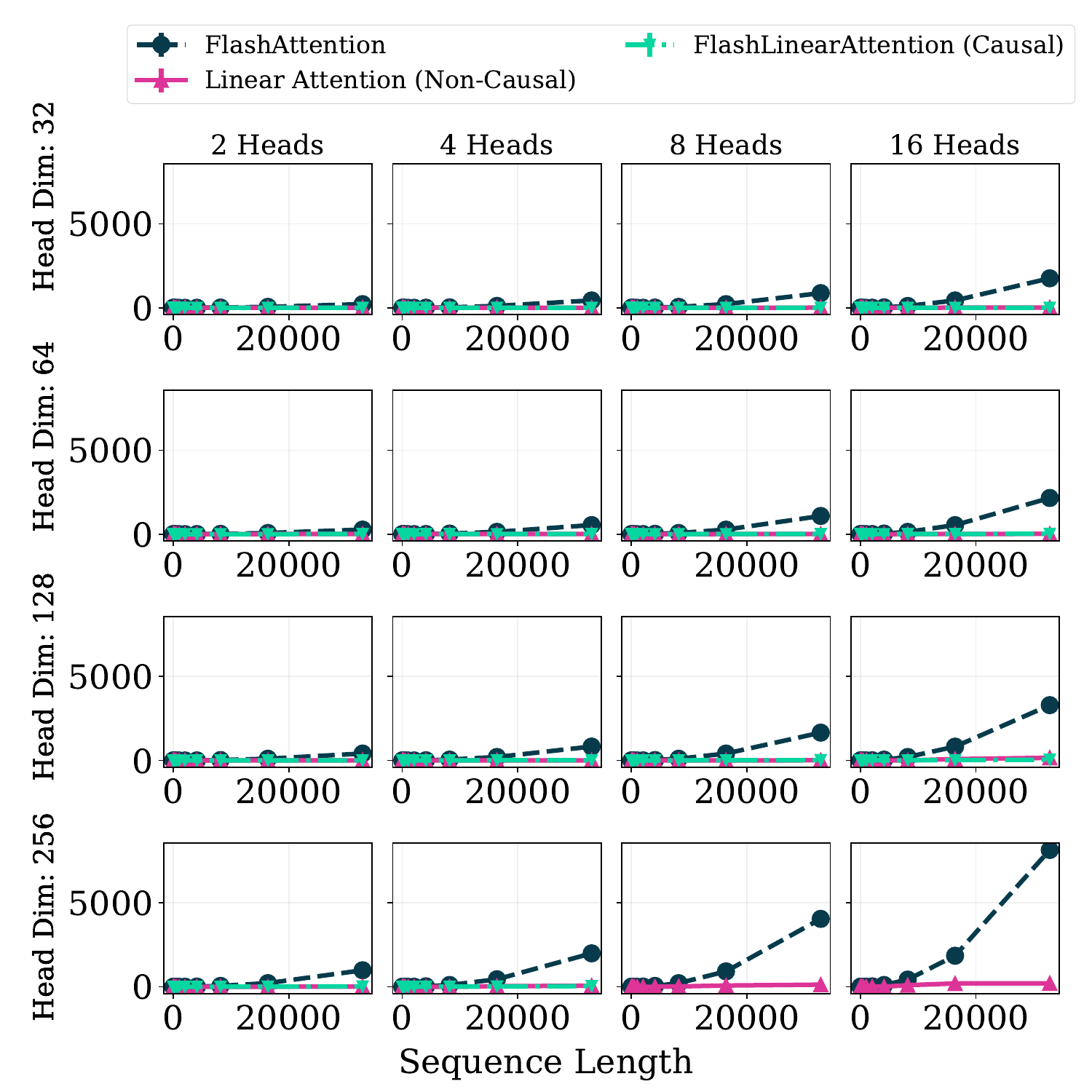}
    \caption{Time}
    \label{fig:time_attn}
    \end{subfigure}
    \hfill
    \begin{subfigure}[b]{0.49\textwidth}
    \includegraphics[width=\linewidth]{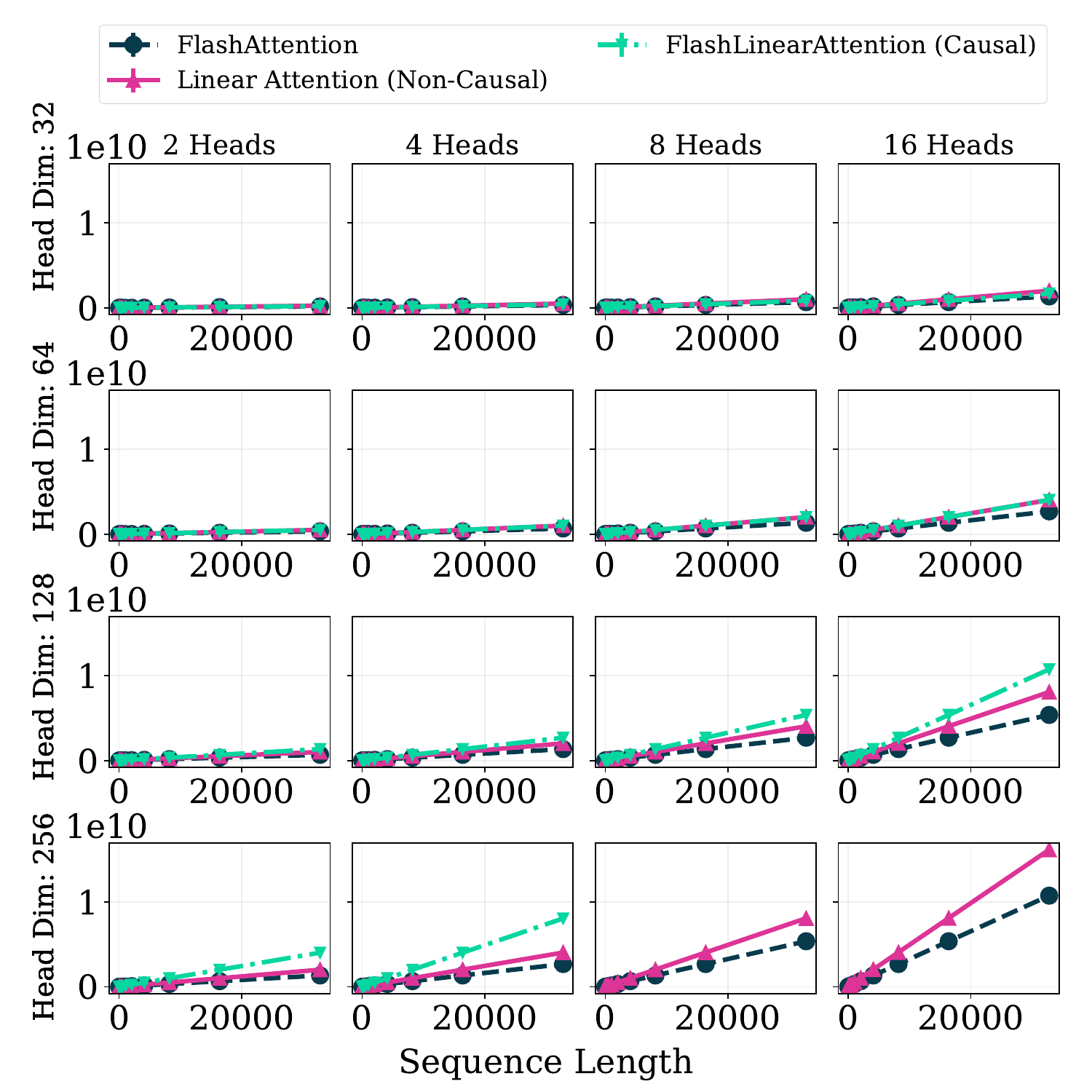}
    \caption{CUDA Memory}
    \label{fig:memory_attn}
    \end{subfigure}
   \caption{Time and CUDA memory usage comparison of FlashAttention-2~\citep{dao2024flashattention}, causal FlashLinearAttention~\citep{yang2024gla}, and linear attention~\citep{katharopoulos2020transformers} (implemented as in Listing~\ref{listing:linear_attn}).
   Results for FlashLinearAttention in two configurations: (1) head dimension 256 with 8 heads, and (2) head dimension 256 with 16 heads are missing, due to illegal memory access error incurred by the PyTorch package \texttt{fla}~\citep{yang2024gla}.
   }
\end{figure}

\subsection{Model Training}\label{app:train}
We implement linear attention with the feature function $\texttt{elu}(\cdot)+1$, adhering to the default implementation proposed by \citet{katharopoulos2020transformers}.
Unless otherwise specified, we adopt the training setup of \tabpfn{} for \ours{}-S100, \ours{}-L100, and \ours{}-H1K. Each model is trained on a single Nvidia A100 80GB PCIe GPU.

\begin{table}[htbp]
    \centering
    \begin{tabular}{c|cccc}
    \toprule
         \textbf{Hyperparameters} & \textbf{Batch Size}  & \textbf{Epoch} & \textbf{Learning Rate} & \textbf{\#Steps\slash epoch} \\
         \midrule
        \ours{}-S100 & 1210 & 8 & 3e-5 & 8192\\
        \ours{}-L100 & 110 & 4 & 3e-5 & 8192 \\
        \ours{}-H1K & 1410 & 4 & 3e-5 & 1024 \\
        \bottomrule
    \end{tabular}%
    \caption{
    \textbf{Hyperparameters used for training \ours{} models.}
    The number of steps per epoch indicates the quantity of synthetic datasets generated and used for training within each epoch.
    }
    \label{tab:hyperparameters}
\end{table}

\begin{wrapfigure}{L}{0.5\textwidth}
    \centering
    \includegraphics[width=\linewidth]{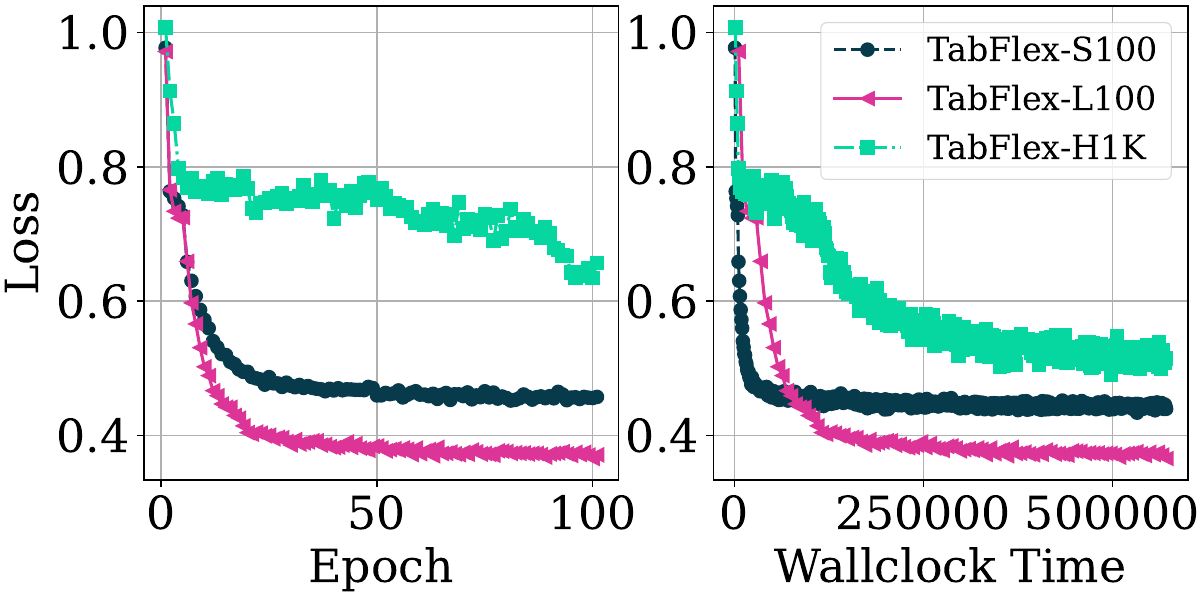}
    \caption{Visualization of training loss for \ours{} models as a function of epoch and wallclock time.
    }
    \label{fig:loss}
    \vspace{-.2in}
\end{wrapfigure}
Table~\ref{tab:hyperparameters} summarizes the hyperparameters selected for training \ours{}-S100, \ours{}-L100, and \ours{}-H1K.
For all three methods, we utilize the same embedding size of 512, consistent with \tabpfn{}.
We extend the feature capacity by modifying the first linear layer, which projects the features into embeddings -- specifically, we increase the number of neurons responsible for receiving the features.

The training loss curves are illustrated in Fig.~\ref{fig:loss}.
We observe that as the number of features and the length of training dataset sequences increase, the training process becomes more time-consuming.
In fact, training a robust \ours{}-H1K model requires more than three weeks.

\subsection{Validation Datasets}\label{app:val}
We select the validation datasets from the OpenML AutoML Benchmark~\citep{automl} by choosing 10 datasets from each of the following sample size intervals: $[0.1\text{K}, 1\text{K})$, $[1\text{K}, 10\text{K})$, and $[10\text{K}, 100\text{K})$. To ensure diversity in the validation set, we also vary the number of classes and features within each interval. The details of all datasets used in validation are summarized in Table~\ref{tab:val}.
\begin{table}[htbp]
    \centering
    \resizebox{0.9\textwidth}{!}{
\begin{tabular}{lp{3in}|ccc}
\toprule
\textbf{OpenML did} & \textbf{Dataset} & \textbf{\#Features} & \textbf{\#Instances} & \textbf{\#Classes} \\
\midrule
279 & meta-stream-intervals.arff & 75 & 45164 & 11 \\
311 & oil-spill & 50 & 937 & 2 \\
742 & fri-c4-500-100 & 101 & 500 & 2 \\
825 & boston-corrected & 21 & 506 & 2 \\
833 & bank32nh & 33 & 8192 & 2 \\
841 & stock & 10 & 950 & 2 \\
920 & fri-c2-500-50 & 51 & 500 & 2 \\
940 & water-treatment & 37 & 527 & 2 \\
981 & kdd-internet-usage & 69 & 10108 & 2 \\
1039 & hiva-agnostic & 1618 & 4229 & 2 \\
1491 & one-hundred-plants-margin & 65 & 1600 & 100 \\
1492 & one-hundred-plants-shape & 65 & 1600 & 100 \\
1503 & spoken-arabic-digit & 15 & 263256 & 10 \\
1515 & micro-mass & 1301 & 571 & 20 \\
1536 & volcanoes-b6 & 4 & 10130 & 5 \\
1541 & volcanoes-d4 & 4 & 8654 & 5 \\
1549 & autoUniv-au6-750 & 41 & 750 & 8 \\
40645 & GAMETES-Epistasis-2-Way-1000atts-0.4H-EDM-1-EDM-1-1 & 1001 & 1600 & 2 \\
40672 & fars & 30 & 100968 & 8 \\
40677 & led24 & 25 & 3200 & 10 \\
40693 & xd6 & 10 & 973 & 2 \\
40705 & tokyo1 & 45 & 959 & 2 \\
40922 & Run-or-walk-information & 7 & 88588 & 2 \\
40985 & tamilnadu-electricity & 4 & 45781 & 20 \\
41082 & USPS & 257 & 9298 & 10 \\
41144 & madeline & 260 & 3140 & 2 \\
41986 & GTSRB-HOG01 & 1569 & 51839 & 43 \\
41988 & GTSRB-HOG02 & 1569 & 51839 & 43 \\
41989 & GTSRB-HOG03 & 2917 & 51839 & 43 \\
41990 & GTSRB-HueHist & 257 & 51839 & 43 \\
41991 & Kuzushiji-49 & 785 & 270912 & 49 \\
42193 & compas-two-years & 14 & 5278 & 2 \\
42206 & porto-seguro & 38 & 595212 & 2 \\
42343 & KDD98 & 478 & 82318 & 2 \\
\bottomrule
\end{tabular}}
    \caption{Characteristics of datasets in our diverse validation set.}
    \label{tab:val}
\end{table}

\subsection{Sensitivity Analysis on Decision Boundaries}\label{app:sensitivity}
The decision thresholds align with the training regimes of the models. 
\ours{}-S100, sharing TabPFN’s training setup but with an updated architecture, is deployed similarly $(n \leq 3K, d \leq 100)$. 
\ours{}-L100, trained on low-dimensional $(d \leq 100)$ but larger datasets, is used for longer sequences $(n \geq 3K, d \leq 100)$. \ours{}-H1K, trained on high-dimensional data, is assigned to handle those cases accordingly.

We note that performance is not highly sensitive to the chosen decision boundaries. 
To demonstrate this, we conducted additional experiments on simple (low-dimension, small-size), low-dimensional \& large, and high-dimensional \& large datasets—two datasets per setting—and present the results for all three models in Table~\ref{tab:robustness}.

\begin{table}[]
\centering
\resizebox{0.85\textwidth}{!}{
\begin{tabular}{ccccccc}
\toprule
\multicolumn{2}{c}{\textbf{Dataset}}                                 & \multicolumn{1}{l}{\textbf{Metric}} & \multicolumn{1}{l}{\textbf{TabPFN}} & \multicolumn{1}{l}{\textbf{TabFlex-S100}} & \multicolumn{1}{l}{\textbf{TabFlex-L100}} & \multicolumn{1}{l}{\textbf{TabFlex-H1K}} \\
\midrule
\multirow{4}{*}{\textbf{Simple}}                    & \multirow{2}{*}{credit-g}       & Accuracy                             & 0.79 & \textbf{0.82} & 0.79 & 0.75                              \\
&                & Time (s) & 0.23 & 0.13 & 0.13 & 0.13 \\ \cmidrule{2-7}
& \multirow{2}{*}{diabet}         & AUC  & \textbf{0.78} & \textbf{0.78} & 0.77 & \textbf{0.78}  \\
&                & Time (s)   & 0.15 & 0.08 & 0.10 & 0.09 \\
\midrule
\multirow{4}{*}{\textbf{Low-Dimensional \& Large}}  & \multirow{2}{*}{bank-marketing} & AUC   & 0.89 & 0.89 & \textbf{0.90} & 0.89\\
&                & Time (s) & 1.75 & 0.25 & 2.43 & 1.67 \\
\cmidrule{2-7}
& \multirow{2}{*}{elevators}      & AUC  & 0.94 & 0.94 & \textbf{0.95} & 0.94 \\
&                & Time (s) & 1.11 & 0.22 & 0.7 & 0.7 \\
\midrule
\multirow{4}{*}{\textbf{High-dimensional \& Large}} & \multirow{2}{*}{nomao}          & AUC & 0.86 & 0.83 & 0.75 & \textbf{0.99}\\
&                & Time (s) & 1.95 & 0.86 & 4.71 & 4.63 \\
\cmidrule{2-7}
& \multirow{2}{*}{SpeedDating}    & AUC  & 0.66 & 0.69 & 0.59 & \textbf{0.83}\\
&                & Time (s) & 2.86 & 0.89 & 1.63 & 1.71\\ \bottomrule                               
\end{tabular}
}
\caption{
Performance of \ours{}-S100, \ours{}-L100, and \ours{}-H1K across three types of datasets, using TabPFN as a baseline. We observe that all \ours{} variants perform well on both simple and low-dimensional large datasets, demonstrating that performance is fairly robust to the choice of decision threshold.
}
\label{tab:robustness}
\end{table}

\section{Supplement to Section~\ref{sec:exp}: Performance Evaluation of \ours{}}\label{app:exp}

\subsection{TabZilla Datasets}\label{app:tabzilla}
The results of our experiments on TabZilla-related datasets are reported in Table~\ref{tab:tabzilla1}, \ref{tab:tabzilla2}, and \ref{tab:tabzilla4}.
\citep{mcelfresh2023tabzilla} presents the details of the datasets used in their hard benchmark (Table~\ref{tab:tabzilla4}) in Table 4 of their paper.
We provide the specifications of the datasets used for our evaluation in Table~\ref{tab:tabzilla1} and Table~\ref{tab:tabzilla2} in Table~\ref{tab:dataset_tabzilla1} and Table~\ref{tab:dataset_tabzilla2}, respectively.
\begin{table}[htbp]
    \centering
    \resizebox{.8\textwidth}{!}{
    \begin{tabular}{lrrr|lrrr|p{1in}rrr}
\toprule
\textbf{Dataset} & $D$ & $N$ & $C$ & \textbf{Dataset} & $D$ & $N$ & $C$ & \textbf{Dataset} & $D$ & $N$ & $C$ \\
\midrule
cmc & 9 & 1473 & 3 & socmob & 5 & 1156 & 1 & adult-census & 14 & 32561 & 2 \\ \midrule
kc1 & 21 & 2109 & 1 & vehicle & 18 & 846 & 4 & breast-cancer & 9 & 286 & 2 \\ \midrule
kc2 & 21 & 522 & 1 & heart-h & 13 & 294 & 1 & mfeat-factors & 216 & 2000 & 10 \\ \midrule
pc3 & 37 & 1563 & 1 & jasmine & 144 & 2984 & 1 & mfeat-zernike & 47 & 2000 & 10 \\ \midrule
pc4 & 37 & 1458 & 1 & phoneme & 5 & 5404 & 1 & dresses-sales & 12 & 500 & 2 \\ \midrule
pc1 & 21 & 1109 & 1 & semeion & 256 & 1593 & 10 & mfeat-fourier & 76 & 2000 & 10 \\ \midrule
cjs & 33 & 2796 & 6 & heart-c & 13 & 303 & 1 & balance-scale & 4 & 625 & 3 \\ \midrule
car & 6 & 1728 & 4 & kr-vs-kp & 36 & 3196 & 1 & bank-marketing & 16 & 45211 & 2 \\ \midrule
tae & 5 & 151 & 3 & spambase & 57 & 4601 & 1 & car-evaluation & 21 & 1728 & 4 \\ \midrule
jm1 & 21 & 10885 & 1 & satimage & 36 & 6430 & 6 & cylinder-bands & 37 & 540 & 2 \\ \midrule
dna & 180 & 3186 & 3 & mushroom & 22 & 8124 & 1 & mfeat-karhunen & 64 & 2000 & 10 \\ \midrule
musk & 167 & 6598 & 1 & diabetes & 8 & 768 & 1 & credit-approval & 15 & 690 & 2 \\ \midrule
wdbc & 30 & 569 & 1 & rabe\_266 & 2 & 120 & 1 & ozone-level-8hr & 72 & 2534 & 2 \\ \midrule
wilt & 5 & 4839 & 1 & breast-w & 9 & 699 & 1 & analcatdata\_dmft & 4 & 797 & 6 \\ \midrule
ilpd & 10 & 583 & 1 & elevators & 18 & 16599 & 1 & monks-problems-2 & 6 & 601 & 2 \\ \midrule
sick & 28 & 3772 & 1 & Satellite & 36 & 5100 & 1 & cardiotocography & 35 & 2126 & 10 \\ \midrule
iris & 4 & 150 & 3 & fertility & 9 & 100 & 1 & PhishingWebsites & 30 & 11055 & 2 \\ \midrule
lymph & 18 & 148 & 4 & ionosphere & 34 & 351 & 1 & synthetic\_control & 60 & 600 & 6 \\ \midrule
churn & 20 & 5000 & 1 & transplant & 3 & 131 & 1 & steel-plates-fault & 27 & 1941 & 7 \\ \midrule
colic & 22 & 368 & 1 & eucalyptus & 19 & 736 & 5 & mfeat-morphological & 6 & 2000 & 10 \\ \midrule
ecoli & 7 & 336 & 8 & Australian & 14 & 690 & 1 & acute-inflammations & 6 & 120 & 2 \\ \midrule
autos & 25 & 205 & 6 & hayes-roth & 4 & 160 & 3 & analcatdata\_boxing1 & 3 & 120 & 2 \\ \midrule
scene & 299 & 2407 & 1 & dermatology & 34 & 366 & 6 & analcatdata\_chlamydia & 3 & 100 & 2 \\ \midrule
profb & 9 & 672 & 1 & MiceProtein & 77 & 1080 & 8 & wall-robot-navigation & 24 & 5456 & 4 \\ \midrule
colic & 26 & 368 & 1 & SpeedDating & 120 & 8378 & 1 & visualizing\_livestock & 2 & 130 & 2 \\ \midrule
labor & 16 & 57 & 1 & tic-tac-toe & 9 & 958 & 1 & Click\_prediction\_small & 11 & 39948 & 2 \\ \midrule
irish & 5 & 500 & 1 & hill-valley & 100 & 1212 & 1 & analcatdata\_authorship & 70 & 841 & 4 \\ \midrule
glass & 9 & 214 & 6 & page-blocks & 10 & 5473 & 5 & banknote-authentication & 4 & 1372 & 2 \\ \midrule
yeast & 8 & 1269 & 4 & lung-cancer & 56 & 32 & 3 & LED-display-domain-7digit & 7 & 500 & 10 \\ \midrule
sonar & 60 & 208 & 1 & qsar-biodeg & 41 & 1055 & 1 & visualizing-environmental & 3 & 111 & 2 \\ \midrule
splice & 60 & 3190 & 3 & fri\_c3\_100\_5 & 5 & 100 & 1 & postoperative-patient-data & 8 & 88 & 2 \\ \midrule
libras & 104 & 360 & 10 & ada\_agnostic & 48 & 4562 & 1 & blood-transfusion-service-center & 4 & 748 & 2 \\ \midrule
anneal & 38 & 898 & 5 & fri\_c0\_100\_5 & 5 & 100 & 1 & \\ 
\bottomrule
\end{tabular}}
    \caption{Datasets utilized in the evaluation presented in Table~\ref{tab:tabzilla1}.
    Here $D$, $N$, and $C$ denote the number of features, instances, and classes, respectively.
    }
    \label{tab:dataset_tabzilla1}
\end{table}

\begin{table}[htbp]
    \centering
    \resizebox{.6\textwidth}{!}{
\begin{tabular}{l|ccc}
\toprule
 \textbf{Dataset} & \textbf{\#Features} & \textbf{\#Instances} & \textbf{\#Classes} \\
\midrule
Australian & 14 & 690 & 2 \\
LED-display-domain-7digit & 7 & 500 & 10 \\
MiceProtein & 77 & 1080 & 8 \\
acute-inflammations & 6 & 120 & 2 \\
analcatdata\_authorship & 70 & 841 & 4 \\
analcatdata\_boxing1 & 3 & 120 & 2 \\
analcatdata\_chlamydia & 3 & 100 & 2 \\
analcatdata\_dmft & 4 & 797 & 6 \\
anneal & 38 & 898 & 5 \\
autos & 25 & 205 & 6 \\
balance-scale & 4 & 625 & 3 \\
blood-transfusion-service-center & 4 & 748 & 2 \\
blood-transfusion-service-center & 4 & 748 & 2 \\
breast-cancer & 9 & 286 & 2 \\
breast-w & 9 & 699 & 2 \\
colic & 26 & 368 & 2 \\
colic & 22 & 368 & 2 \\
credit-approval & 15 & 690 & 2 \\
cylinder-bands & 37 & 540 & 2 \\
dermatology & 34 & 366 & 6 \\
diabetes & 8 & 768 & 2 \\
dresses-sales & 12 & 500 & 2 \\
ecoli & 7 & 336 & 8 \\
eucalyptus & 19 & 736 & 5 \\
fertility & 9 & 100 & 2 \\
fri\_c0\_100\_5 & 5 & 100 & 2 \\
fri\_c3\_100\_5 & 5 & 100 & 2 \\
glass & 9 & 214 & 6 \\
hayes-roth & 4 & 160 & 3 \\
heart-c & 13 & 303 & 2 \\
heart-h & 13 & 294 & 2 \\
hill-valley & 100 & 1212 & 2 \\
ilpd & 10 & 583 & 2 \\
ionosphere & 34 & 351 & 2 \\
iris & 4 & 150 & 3 \\
irish & 5 & 500 & 2 \\
kc2 & 21 & 522 & 2 \\
labor & 16 & 57 & 2 \\
lung-cancer & 56 & 32 & 3 \\
lymph & 18 & 148 & 4 \\
monks-problems-2 & 6 & 601 & 2 \\
pc1 & 21 & 1109 & 2 \\
postoperative-patient-data & 8 & 88 & 2 \\
profb & 9 & 672 & 2 \\
qsar-biodeg & 41 & 1055 & 2 \\
rabe\_266 & 2 & 120 & 2 \\
socmob & 5 & 1156 & 2 \\
sonar & 60 & 208 & 2 \\
synthetic\_control & 60 & 600 & 6 \\
tae & 5 & 151 & 3 \\
tic-tac-toe & 9 & 958 & 2 \\
transplant & 3 & 131 & 2 \\
vehicle & 18 & 846 & 4 \\
visualizing\_environmental & 3 & 111 & 2 \\
visualizing\_livestock & 2 & 130 & 2 \\
wdbc & 30 & 569 & 2 \\
yeast & 8 & 1269 & 4 \\
\bottomrule
\end{tabular}}
    \caption{Datasets utilized in the evaluation presented in Table~\ref{tab:tabzilla2}.
    }
    \label{tab:dataset_tabzilla2}
\end{table}

\subsection{Extended Experiment Setup}
\label{App:experiment_setup}
\paragraph{Baselines.}
We evaluate our approach against a comprehensive set of baselines, as considered by \citet{mcelfresh2023tabzilla}. 
These include:
(i) classical methods: Random Forest \citep{liaw2002classification}, SVM \citep{cortes1995support}, LinearModel \citep{cox1958regression}, KNN \citep{cover1967nearest} and Decision Tree \citep{quinlan1986induction};
(ii) Gradient Boosted Decision Trees (GBDT) methods: XGBoost \citep{chen2016xgboost}, CatBoost \citep{prokhorenkova2018catboost}, and LightGBM \citep{ke2017lightgbm};
(iii) Non-Transformer Neural Network (Non-TF NN) methods: SAINT \citep{somepalli2021saint}, ResNet \citep{resnet}, DANet \citep{chen2022danets}, NODE \citep{popov2019neural}, MLP~\citep{mlp}, MLP-rtdl \citep{gorishniy2021revisiting}, DeepFM \citep{guo2017deepfm}, STG \citep{yamada2020feature}, VIME \citep{yoon2020vime}, and NAM \citep{agarwal2021neural};
(iv) Transformer (TF) methods: \tabpfn{} \citep{hollmann2023tabpfn}, FTTransformer \citep{gorishniy2021revisiting}, TabNet \citep{arik2021tabnet}, and TabTransformer \citep{huang2020tabtransformer}.
The results for these methods, except \tabpfn{}, are taken directly from \citet{mcelfresh2023tabzilla}, who conducted their experiments using a V100 GPU, while our experiments are run on an A100 GPU, which may introduce slight variations in performance.
Additionally, we incorporate two recent methods designed for scaling tabular classification: TuneTables \citep{feuer2024tunetables}, a TF method, and HyperFast \citep{bonet2024hyperfast}, a Non-TF NN method.

Note that not all baselines successfully ran on all datasets. 
Many methods face constraints and encounter issues, particularly with the TabZilla hard benchmark, often due to poor scalability. 
We explicitly indicate which methods failed to run smoothly across all datasets.
Originally, \tabpfn{} was limited to datasets with no more than 100 features and 10 classes. To facilitate a fair comparison between \ours{} and \tabpfn{}, we implemented workarounds to prevent \tabpfn{} from encountering errors. 
For datasets exceeding 100 features, we performed random feature selection. For those with more than 10 classes, we evaluated the accuracy of the nine most prevalent classes and marked all other classes as other, and incorrect.
For TuneTables, we directly import \texttt{TuneTablesClassifier} from their Python package \texttt{tunetables}.
Note that our results differ from those reported in their paper, as their study involved more extensive hyperparameter search, which significantly increased runtime.
We also compare our methods with TuneTables using the dataset split specified in their paper's setting, with results deferred to Section~\ref{app:tunetables}.
Similarly, for HyperFast, we utilize \texttt{HyperFastClassifier} directly from their Python package \texttt{hyperfast} default parameters.
Notably, HyperFast is meta-trained on many datasets we use for evaluation. %

\subsection{Results on 98 Simple Datasets (Table 1, \citet{mcelfresh2023tabzilla})}\label{App:sec_detailed_result}
The results, presented in Table~\ref{tab:tabzilla1}, are consistent with the conclusions drawn in the main text.
\begin{table*}[t]
    \centering
    \resizebox{0.75\textwidth}{!}{
\begin{tabular}{l|l|cc|cc|rr}
\toprule
 \multirow{2}{*}{\textbf{Algorithm}} & \multirow{2}{*}{\textbf{Class}} & \multicolumn{2}{c|}{\textbf{Mean AUC}} & \multicolumn{2}{c|}{\textbf{Std. AUC}} & \multicolumn{2}{c}{\textbf{Time / 1000 inst.}} \\
 \cmidrule{3-8}
 & & median & mean & mean & median & median & mean \\
\midrule
TabPFN~\citep{hollmann2023tabpfn} & \tf{} & 0.97 & 0.84 & 0.15 & 0.08 & 0.56 & 0.74 \\
CatBoost~\citep{prokhorenkova2018catboost} & GBDT & 0.97 & 0.92 & 0.15 & 0.07 & 1.95 & 20.51 \\
\rowcolor[gray]{0.9} \textbf{\ours{} (Ours)} & \tf{} & 0.96 & 0.90 & 0.15 & 0.08 & 0.22 & 0.37 \\
XGBoost~\citep{chen2016xgboost} & GBDT & 0.96 & 0.91 & 0.16 & 0.09 & 0.38 & 0.85 \\
RandomForest~\citep{liaw2002classification} & Classical & 0.95 & 0.90 & 0.16 & 0.09 & 0.32 & 0.47 \\
SAINT~\citep{somepalli2021saint} & \tf{} & 0.94 & 0.86 & 0.16 & 0.11 & 146.15 & 170.56 \\
HyperFast~\citep{bonet2024hyperfast} & \nontf{} & 0.94 & 0.87 & 0.15 & 0.09 & 53.45 & 89.75 \\
LightGBM~\citep{ke2017lightgbm} & GBDT & 0.93 & 0.85 & 0.18 & 0.09 & 0.29 & 0.90 \\
ResNet~\citep{resnet} & \nontf{} & 0.93 & 0.85 & 0.16 & 0.10 & 8.83 & 15.99 \\
DANet~\citep{chen2022danets} & \nontf{} & 0.92 & 0.85 & 0.16 & 0.08 & 57.18 & 64.29 \\
NODE~\citep{popov2019neural} & \nontf{} & 0.91 & 0.83 & 0.16 & 0.11 & 131.73 & 160.76 \\
FTTransformer~\citep{gorishniy2021revisiting} & \tf{} & 0.89 & 0.81 & 0.17 & 0.11 & 18.04 & 27.91 \\
SVM~\citep{cortes1995support} & Classical & 0.89 & 0.78 & 0.19 & 0.09 & 2.06 & 61.18 \\
MLP-rtdl~\citep{gorishniy2021revisiting} & \nontf{} & 0.88 & 0.75 & 0.18 & 0.11 & 7.09 & 15.21 \\
DeepFM~\citep{guo2017deepfm} & \nontf{} & 0.87 & 0.77 & 0.19 & 0.12 & 4.89 & 6.05 \\
TabNet~\citep{arik2021tabnet} & \tf{} & 0.85 & 0.68 & 0.26 & 0.14 & 29.34 & 35.12 \\
STG~\citep{yamada2020feature} & \nontf{} & 0.82 & 0.71 & 0.20 & 0.14 & 15.98 & 18.58 \\
TuneTables~\citep{feuer2024tunetables} & \tf{} & 0.81 & 0.70 & 0.25 & 0.16 & 32.96 & 73.40 \\
LinearModel~\citep{cox1958regression} & Classical & 0.78 & 0.67 & 0.19 & 0.14 & 0.03 & 0.04 \\
MLP~\citep{mlp} & \nontf{} & 0.76 & 0.68 & 0.20 & 0.13 & 11.23 & 18.31 \\
DecisionTree~\citep{quinlan1986induction} & Classical & 0.74 & 0.63 & 0.24 & 0.18 & 0.01 & 0.03 \\
TabTransformer~\citep{huang2020tabtransformer} & \tf{} & 0.72 & 0.61 & 0.17 & 0.13 & 13.45 & 22.05 \\
KNN~\citep{cover1967nearest} & Classical & 0.70 & 0.61 & 0.21 & 0.14 & 0.03 & 0.05 \\
VIME~\citep{yoon2020vime} & \nontf{} & 0.60 & 0.54 & 0.25 & 0.15 & 15.60 & 17.98 \\
NAM~\citep{agarwal2021neural} & \nontf{} & 0.39 & 0.44 & 0.27 & 0.19 & 97.99 & 233.77 \\
\bottomrule
\end{tabular}}
    \captionsetup{skip=5pt}
    \caption{
    \textbf{Performance comparison of algorithms across 98 simple datasets (as used in Table 1 of \citet{mcelfresh2023tabzilla})}.
    The reported AUC values are normalized. 
    The ``Time/1000 inst.'' column represents the combined training and test time for all datasets, divided by the total number of samples.
    Notably, \ours{} achieves top 3 performance, with faster runtimes compared to baselines of similar performance, and a 2$\times$ speedup relative to \tabpfn{}.
}
    \label{tab:tabzilla1}
\end{table*}

\subsection{Evaluation on Additional Datasets}\label{app:large}
We provide additional evaluation of \ours{} on eight large datasets randomly selected from OpenML-CC18 Benchmarks~\citep{openmlcc}, after excluding the datasets contained in TabZilla's evaluation.
As shown in Table~\ref{tab:large}, \ours{} consistently outperforms \tabpfn{} in terms of speed and achieves superior performance on the majority of the datasets.

\begin{table}[htbp]
    \centering
    \resizebox{\textwidth}{!}{
\begin{tabular}{l|rrr|cc|cc}
\toprule
\multirow{2}{*}{\textbf{Dataset}} & \multirow{2}{*}{\textbf{\#Features}} & \multirow{2}{*}{\textbf{\#Instances}} & \multirow{2}{*}{\textbf{\#Classes}}  & \multicolumn{2}{c|}{\textbf{Mean AUC}} & \multicolumn{2}{c}{\textbf{Mean Time (seconds)}} \\ \cmidrule{5-8}
& & & & \tabpfn{} & \ours{} & \tabpfn{} & \ours{} \\
\midrule
kick & 33 & 72983 & 2 & 0.663 & \textbf{0.684} & 13.330 & \textbf{3.096} \\
Click-prediction-small-1220 & 10 & 39948 & 2 & 0.652 & \textbf{0.659} & 3.663 & \textbf{0.887} \\
house-8L & 9 & 22784 & 2 & \textbf{0.947} & 0.945 & 1.383 & \textbf{0.536} \\
okcupid-stem & 20 & 50789 & 3 & 0.825 & \textbf{0.828} & 6.152 & \textbf{1.511} \\
volcanoes-b1 & 4 & 10176 & 5 & 0.660 & \textbf{0.663} & 0.349 & \textbf{0.202} \\
volcanoes-b2 & 4 & 10668 & 5 & 0.651 & \textbf{0.652} & 0.375 & \textbf{0.217} \\
kdd-internet-usage & 69 & 10108 & 2 & \textbf{0.932} & \textbf{0.932} & 1.021 & \textbf{0.851} \\
BNG(tic-tac-toe) & 10 & 39366 & 2 & \textbf{0.836} & 0.835 & 3.626 & \textbf{1.111} \\
\bottomrule
\end{tabular}}
    \caption{
\textbf{Performance comparison between \tabpfn{} and \ours{} on an additional large dataset.}
We observe that \ours{} is consistently faster than \tabpfn{} and outperforms it on the majority of the datasets.
    }
    \label{tab:large}
\end{table}

\subsection{Additional Comparison with TuneTables}\label{app:tunetables}
As mentioned in Section~\ref{sec:exp}, the results of TuneTables presented in Table~\ref{tab:tunetables} of our main experiments use \texttt{TuneTablesClassifier}.
However, we note that the original paper reported results after 30 iterations of hyperparameter tuning.
They also applied this process to \tabpfn{}, using a different subset of datasets as training samples at each iteration. 
In Table~\ref{tab:tunetables}, we compare the performance of \ours{} without any hyperparameter tuning to the results reported in their paper.
\ours{} remains competitive, particularly when the number of samples is limited. 
While TuneTables tends to perform better with larger sample sizes due to its ability to update model parameters based on training data, \ours{} maintains comparable performance while being significantly faster.
\begin{table}[htbp]
\resizebox{0.9\textwidth}{!}{
\centering
\begin{tabular}{lr|cr|cr|cr}
\toprule
\multirow{2}{*}{\textbf{Dataset}} & \multirow{2}{*}{\textbf{Size}} &  \multicolumn{2}{c|}{\textbf{\tabpfn{}}} &  \multicolumn{2}{c|}{\textbf{TuneTables}} & \multicolumn{2}{c}{\textbf{\ours{}}}\\ \cmidrule{3-8}
 & & Acc. & Runtime (sec.) & Acc. & Runtime (sec.) & Acc. & Runtime (sec.) \\ \midrule
breast-cancer & 286 & .765 & \underline{29} & \underline{.770} & 65 & \textbf{.793} & \textbf{1}\\
heart-c & 303 & \underline{.848} & \underline{40} & \textbf{.903} & 66 & \textbf{.903} & \textbf{0}\\
ecoli & 336 & \underline{.848} & \underline{30} & .843 & 66 & \textbf{.882} & \textbf{0} \\
colic & 368 & .856 & \underline{39} & \textbf{.892} & 66 & \textbf{.892} & \textbf{0} \\
dresses-sales & 500 & .578 & \underline{41} & \textbf{.580} & 122 & \textbf{.580} & \textbf{0} \\
cylinder-bands & 540 & \underline{.800} & \underline{41} & \textbf{.846} & 82 & .796 & \textbf{0} \\
climate & 540 & \underline{.959} & \underline{59} & .951 & 97 & \textbf{.963} & \textbf{0} \\
balance-scale & 625 & .990 & \underline{29} & \underline{.995} & 55 & \textbf{1.000} & \textbf{0} \\
blood-transfusion & 748 & \underline{.801} & \underline{25} & .782 & 56 & \textbf{.840} & \textbf{0} \\
cmc & 1473 & .554 & \underline{91} & .556 & 109 & \textbf{.605} & \textbf{0} \\
kc-1 & 2109 & \underline{.862} & \underline{168} & .856 & 187 & \textbf{.867} & \textbf{0} \\
bioresponse & 3151 & \underline{.797} & \underline{638} & \textbf{.798} & 3012 & .720 & \textbf{13} \\
christine & 5418 & \underline{.742} & \underline{666} & \textbf{.755} & 3920 & .721 & \textbf{11}\\
robert & 10000 & .250 & \underline{964} & \textbf{.414} & 2397 & \underline{.333} & \textbf{17} \\
dilbert & 10000 & \underline{.922} & \underline{761} & \textbf{.992} & 3749 & .802 & \textbf{17} \\
har & 10299 & \underline{.936} & \underline{370} & \textbf{.981} & 2657 & .918 & \textbf{9} \\
eeg-eye-state & 14980 & \underline{.940} & \underline{178} & \textbf{.986} & 1929 & .837 & \textbf{1} \\
elevators & 16599 & \underline{.902} & \underline{186} & \underline{.902} & 1297 & \textbf{.907} & \textbf{1} \\
riccardo & 20000 & \underline{.922} & \underline{1395} & \textbf{.995} & 5247 & .773 & \textbf{31} \\
volkert & 58310 & \underline{.567} & \underline{459} & .693 & 6331 & .561 & \textbf{12} \\
higgs & 67557 & .671 & \underline{931} & \textbf{.714} & 4084 & \underline{.691} & \textbf{1}\\
connect-4 & 98050 & .668 & \underline{931} & \textbf{.817} & 5395 & \underline{.692} & \textbf{1}\\
BNG (vote) & 131072 & .968 & \underline{1976} & \textbf{.974} & 2493 & \textbf{.974} & \textbf{1} \\
albert & 425240 & \underline{.642} & \underline{2363} & .658 & 17518 & .637 & \textbf{1} \\
airlines & 539383 & \underline{.600} & \underline{2602} & \textbf{.653} & 44434 & .597 & \textbf{2}\\
BNG (labor) & 1000000 & .937 & \underline{5518}  & \textbf{.967} & 7717 & \underline{.950} & \textbf{8} \\
agrawall & 1000000 & \underline{.948} & \underline{5158} & \textbf{.950} & 45504 & \underline{.948} & \textbf{3} \\
poker-hand & 1025009 & .531 & \underline{2423} & \textbf{1.000} & 10471 & \underline{.542} & \textbf{15} \\
click-prediction-small & 1997410 & \underline{.833} & \underline{10421} & \textbf{.837} & 33148 & \underline{.833} & \textbf{5} \\
\bottomrule
\end{tabular}
}
\caption{
Accuracy comparison of \tabpfn{}, TuneTables, and \ours{} on test datasets from \citet{feuer2024tunetables}.
Results for \tabpfn{} and TuneTables are directly sourced from \citet{feuer2024tunetables},
where hyperparameter tuning was performed 30 times for both methods.
For \tabpfn{}, hyperparameters determine the subset of the dataset used in ICL.
\ours{} results are reported without hyperparameter tuning.
}
\label{tab:tunetables}
\end{table}

\subsection{Extending \ours{} for Image Classification}\label{app:image}
We explore the application of \ours{} to image classification tasks, comparing it against MLP and ResNet architectures. 
Our evaluation uses straightforward configurations without extensive hyperparameter optimization to maintain reasonable computational costs.
The MLP implementations include both two-layer and three-layer variants, each configured with 10 hidden neurons and trained for 70 epochs at a fixed learning rate of 0.001. 
The ResNet architecture employs 2 residual blocks with main and hidden dimension sizes of 128 and 256, respectively.
The experimental results demonstrate that \ours{} achieves remarkable efficiency gains, operating 30$\times$ faster than the MLP and 400$\times$ faster than the ResNet while maintaining competitive performance. 
This represents a significant advancement in image classification efficiency, particularly noteworthy given that previous approaches like \tabpfn{} were constrained to small, low-dimensional datasets.

We further evaluate \ours{} on an 10-way classification CIFAR-10 image dataset~\cite{krizhevsky2009learning} of 60K samples for which each sample is a color image of $32\times32$ size.
We deploy two approaches to convert images to 1D vectors.
First, we flatten the RGB channels to obtain a vector of 3072 dimensions.
Second, we utilize a pretrained ResNet-18~\cite{resnet} to obtain a semantically meaningful representation of the image as an 384-dim vector.
For each approach, we then feed the 1D vectors to \ours{} as a tabular dataset.
As shown in Table~\ref{tab:comparison}, with first approach (CIFAR10-flattened), we achieve an AUC of 0.791 within seconds of inference.
We find that reducing feature dimension to 30\% with PCA leads to $4$ times lower latency while preserving the AUC score.
The second approach (CIFAR10-embedding) significantly increases AUC to $92.2\%$ while reducing the inference latency by six times.

\begin{table*}[h!]
    \centering
     \resizebox{0.9\textwidth}{!}{
    \begin{tabular}{l|cc|cc|cc|cc}
        \toprule
        \multirow{2}{*}{\textbf{Dataset}}& \multicolumn{2}{c}{\textbf{Two-Layer MLP}} & \multicolumn{2}{c}{\textbf{Three-Layer MLP}} & \multicolumn{2}{c}{\textbf{ResNet}} & \multicolumn{2}{c}{\textbf{\ours{} (Ours)}} \\
        \cmidrule(lr){2-9}
        & \textbf{AUC} & \textbf{Time (s)} & \textbf{AUC} & \textbf{Time (s)} & \textbf{AUC} & \textbf{Time (s)} & \textbf{AUC} & \textbf{Time (s)} \\
        \midrule
        MNIST & 0.924 & 23.547 (30.5$\times$) & 0.959 & 23.060 (29.9$\times$) & - & - & 0.948 & 0.771 \\
        Fashion-MNIST & 0.793 & 23.340 (28.8$\times$) & 0.853 & 23.604 (29.1$\times$) & .990 & 398.45 (491.1$\times$) & 0.979 & 0.810 \\
        CIFAR-10 (flattened) & -& -& -& -& -& -& 0.791 & 5.872\\
        CIFAR-10 (embedding)&- &-& -&- & -& & 0.922 & 0.989\\
        \bottomrule
    \end{tabular}}
        \captionsetup{skip=5pt}
    \caption{Performance comparison of \ours{} against baseline models on image datasets.
    \textit{*Note: MLP and ResNet require significantly more time for training and inference, compared to \ours{}. Missing evaluation will be provided in the supplementary.} 
    }
    \label{tab:comparison}
\end{table*}

\section{Supplement to Section~\ref{sec:ablation}: Ablation Studies}\label{app:ablation}

\subsection{Datasets for Section~\ref{sec:exp_ab}: Incorporating Data-Efficient Techniques}
\label{App:dataset_ab}
For dimensionality reduction, we use the following datasets from OpenML: dna, musk, scene, jasmine, semeion, SpeedDating, hill-valley, mfeat-factors.
These datasets are selected from Table~\ref{tab:dataset_tabzilla1} where feature dimensions are greater than 100.
For the random sampling experiment, we use the following datasets from OpenML: cmc, kc1, car, yeast, car-evaluation, mfeat-morphological, mfeat-zernike, banknote-authentication, socmob.
The tested datasets are selected from Table~\ref{tab:dataset_tabzilla1} where the data size is greater than 1000 instances, and the feature dimension is lower than 100.

\subsection{Performance and Runtime vs. Training Sample Size}
We have demonstrated that \ours{} performs well across diverse tasks. 
Here, we provide a more fine-grained analysis, examining how performance and runtime vary with the number of training samples.
Following the setup of \tabpfn{}~\citep{hollmann2023tabpfn}, we generate synthetic datasets with sample sizes ranging from 1,000 to 12,000 and feature dimensions of 800 and 1,000. 
Results are averaged over 20 synthetic datasets and presented in Fig.~\ref{fig:tabflex_speedup}. 
We observe that accuracy consistently improves with more samples, while runtime increases linearly with the sample size, regardless of feature count.

\begin{figure}
\centering
    \begin{subfigure}[b]{.49\linewidth}
        \includegraphics[width=\linewidth]{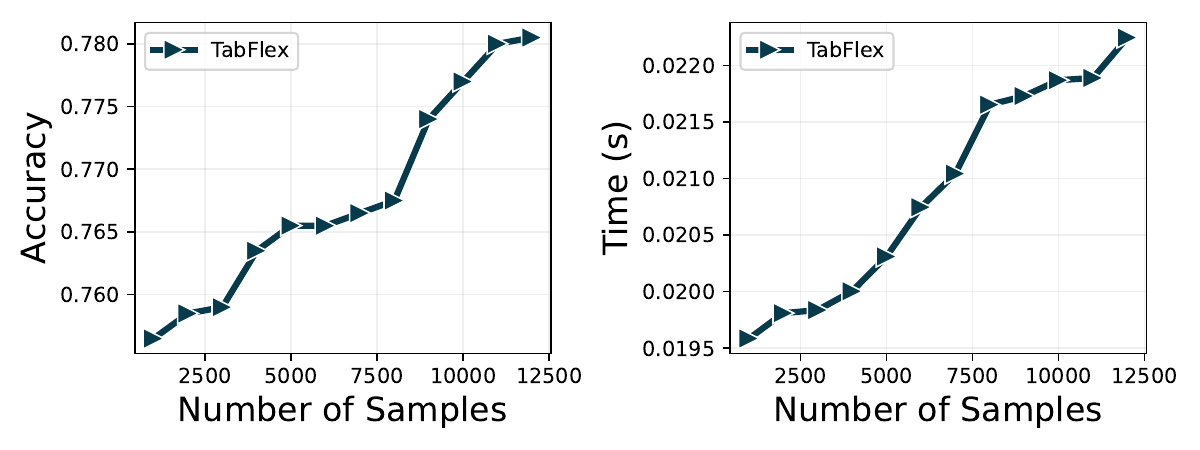}
        \caption{800 features}
    \end{subfigure}
    \begin{subfigure}[b]{.49\linewidth}
        \includegraphics[width=\linewidth]{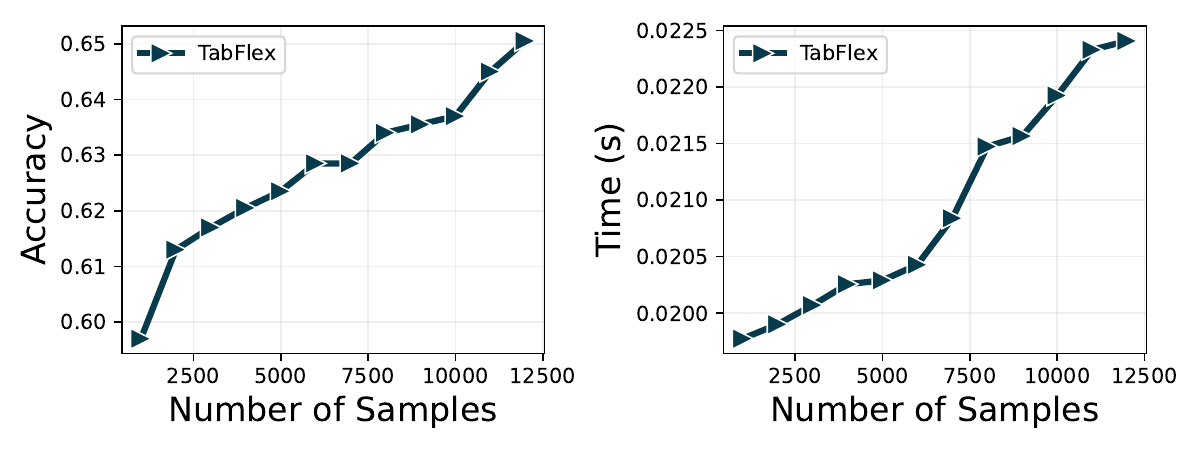}
        \caption{1000 features}
    \end{subfigure}
    \caption{
    \textbf{Accuracy and runtime versus the number of samples.} Two settings are considered: (a) 800 features and (b) 1000 features. Each curve is averaged over 20 synthetic datasets with varying data distributions, generated the same algorithm as employed in TabPFN~\citep{hollmann2023tabpfn}.
    }
    \label{fig:tabflex_speedup}
\end{figure}

\end{document}